\def\eqref#1{equation~\ref{#1}}
\def\1{\bm{1}}
\def\eps{{\epsilon}}
\DeclareMathAlphabet{\mathsfit}{\encodingdefault}{\sfdefault}{m}{sl}
\SetMathAlphabet{\mathsfit}{bold}{\encodingdefault}{\sfdefault}{bx}{n}
\newcommand{\E}{\mathbb{E}}
\newcommand{\R}{\mathbb{R}}
\theoremstyle{plain}
\newtheorem{theorem}{Theorem}[section]
\newtheorem{proposition}[theorem]{Proposition}
\newtheorem{corollary}[theorem]{Corollary}
\theoremstyle{definition}
\newtheorem{definition}[theorem]{Definition}
\theoremstyle{remark}
\newcommand{\St}{\mathcal{S}}
\newcommand{\A}{\mathcal{A}}
\newcommand{\False}{\bot}
\newcommand{\opt}{o}
\newcommand{\Prob}{P}
\title{Learning Abstract World Models for Value-preserving Planning with Options}
\author{%
  Rafael Rodriguez-Sanchez\\
  Department of Computer Science\\
  Brown University\\
  Providence, RI\\
  \texttt{rrs@brown.edu} \\
  \And
  George Konidaris\\
  Department of Computer Science \\
  Brown University \\
  Providence, RI \\
  \texttt{gdk@cs.brown.edu} \\
}
\begin{document}

\maketitle

\begin{abstract}
General-purpose agents require fine-grained controls and rich sensory inputs to perform a wide range of tasks. However, this complexity often leads to intractable decision-making. Traditionally, agents are provided with task-specific action  and observation spaces to mitigate this challenge, but this reduces autonomy. 
Instead, agents must be capable of building state-action spaces at the correct abstraction level from their sensorimotor experiences. We leverage the structure of a given set of temporally-extended actions to learn abstract Markov decision processes (MDPs) that operate at a higher level of temporal and state granularity. We characterize state abstractions necessary to ensure that planning with these skills, by simulating trajectories in the abstract MDP, results in policies with bounded value loss in the original MDP.
We evaluate our approach in goal-based navigation environments that require continuous abstract states to plan successfully and show that abstract model learning improves the sample efficiency of planning and learning.
\end{abstract}

\section{Introduction}

Reinforcement learning (RL) is a promising framework for embodied intelligence because of its flexibility, generality, and online nature. Recently, RL agents have learned to control complex control systems: stratospheric balloons \citep{bellemare2020autonomous},  nuclear fusion reactors \citep{degrave2022magnetic} and drones \citep{kaufmann2023champion}. They have also mastered long-horizon decision-making problems such as the game of Go and chess \citep{silver2016mastering, silver2018general}. To achieve these results, each agent's state representation and action spaces were engineered to make learning tractable: the state space was designed to contain only relevant information for decision-making and the actions were restricted to task-relevant decisions to be made at every time step.  This is in conflict with the state-action space required for versatile, general-purpose agents (e.g., robots), which must possess broad sensory data and precise control capabilities to handle a wide variety of tasks, such as playing chess, folding clothes or navigating a maze.
Abstractions alleviate this tension: action abstractions enable agents to plan at larger temporal scales and state abstractions reduce the complexity of learning and planning; a combination of action and state abstraction results in a new task model that can capture the natural complexity of the task, instead of the complexity of the agent~\citep{Konidaris19}.

For instance, in model-based RL (MBRL; \cite{sutton1991dyna,deisenroth2011pilco}), there is a long line of research that focuses on learning transition and reward models to plan by simulating trajectories. Many modern methods learn abstract state spaces \citep{ha2018recurrent, zhang2019solar, silver2018general,hafner2019learning, hafner2020mastering, hafner2023mastering} to handle complex observation spaces. However, they learn models for the primitive action spaces and work within the single-task setting. 
Recently, there has been interest in using MBRL for skill discovery: \citet{hafner2022deep} learn a model in an abstract state space and learn a further abstraction over it to discover goals in a Feudal RL manner \citep{Dayan92}.
\cite{Bagaria20} and \cite{Bagaria21a,Bagaria21b}, instead, assume that the abstract state space is a graph and learn skills that connect nodes in that graph, effectively building a model that is both abstract in state and in actions. These approaches are ultimately limited because they assume a discrete abstract state space. 

On the other hand, in robotics, high-level planning searches for sequences of \textit{temporally-extended} actions (motor skills) to achieve a task. However, the agents needs a model to compute plans composed of their motor skills and this is typically given to the agent. To enable the agent to learn a model compatible with its motor skills from sensor data, \cite{konidaris2018skills}  propose novel semantics to automatically learn logical predicates from the agent observation space that support task planning with PDDL (Planning Domain Definition Language; \cite{fox2003pddl2, younes2004ppddl1}). Moreover, they provide theoretical guarantees for learning predicates that support sound task planning. 
%
In a similar vein, \cite{Ugur15a, Ugur15b} and \cite{Ahmetoglu22} propose to cluster the effects of motor skills to build discrete symbols for planning.  
Similarly, \cite{Asai22} introduce a discrete VAE (Variational Auto-encoder; \cite{kingma2013auto}) approach to leverage modern deep networks for grounding PDDL predicates and action operators from complex observations. 
While these approaches consider temporally-extended actions and are promising for planning problems where the appropriate state abstractions are discrete, they are not applicable when planning with the available high-level actions requires a continuous state representation.  

Instead, we are interested in learning state abstractions that are continuous, compatible with modern deep learning methods, and that guarantee value-preserving planning with a set of given skills.  Specifically, we focus on building abstract world models in the form of Markov decision processes (MDPs) that have abstract state and action spaces and, in contrast to previous approaches, provide a principled approach to characterize the abstract state space that ensures that planning in simulation with this abstract model produces a policy with expected value equal to that we would get by planning if we had access to the real MDP. In summary, we (1) introduce the necessary and sufficient conditions for constructing an abstract Markov decision process sufficient for value-preserving planning for a given set of skills; (2) introduce an information maximization approach compatible with contemporary deep learning techniques, ensuring a bounded value loss when planning using the abstract model; and finally, (3) provide empirical evidence that these abstract models support effective planning with off-the-shelf deep RL algorithms in goal-based tasks (Mujoco Ant mazes \citep{fu2020d4rl}
and Pinball \citep{KonidarisSkillChaining09} from pixels).
\begin{figure}
    \centering
    \includegraphics[width=0.6\textwidth,trim={0 0 3mm 0},clip]{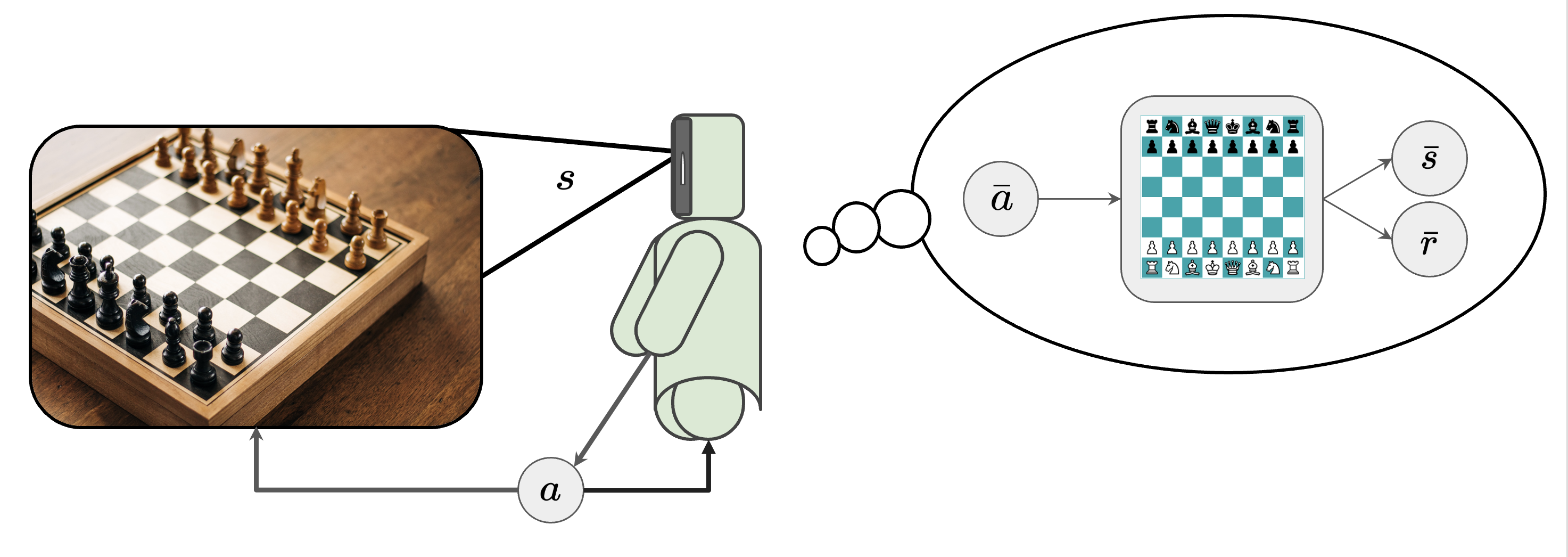}
    \caption{An agent needs to solve a task using its actuators and sensors (on the right). However, it requires an abstract model of the task (on the left) to reason at long time scales. This can be constructed by combining temporally-extended actions $\bar{a}$ with a compatible abstract state representation $\bar{s}$ that contains the minimal information necessary for planning with those actions.} 
    \label{fig:main}
\end{figure}

\section{Background and Notation}

\textbf{Markov Decision Processes}
A continuous state, continuous action Markov decision process (MDP) \citep{puterman2014markov} is defined as the tuple $ M = (\St, \A, T, R, p_0, \gamma)$  where $\St \subseteq \R^{d_s}$ is the state space and $\A \subseteq \R^{d_a}$ is the action space ($d_s, d_a \in \mathbb{N}$), $T: \St\times \A \rightarrow \Delta(\St)$\footnote{$\Delta(\cdot)$ indicates the set of probability measures over a given set.} denotes is the transition kernel that represent the dynamics of the environment, $R: \St\times \A \rightarrow \R$ is the reward function bounded by $R_{Max} \in \R$, $\gamma \in [0,1)$ is the discount factor and $p_0 \in \Delta(\St)$ is the initial state distribution. 

\textbf{Planning and Bellman Equation}
A solution to an MDP is a policy $\pi: \St \rightarrow \Delta(\A)$ that maximizes the expected return $J(\pi) = \E\left[\sum_{t=0}^\infty \gamma^t R(S_t, A_t)\right | S_0 \sim p_0, \pi, T]$. 
%
An important family of solution methods for MDPs are based on the Bellman optimality principle and the Bellman equation. 
For a given policy $\pi$, the state-value function $v^\pi : \St \rightarrow \R$ is defined as $v^{\pi}(s) := \E\left[ \sum_{t=0}^\infty \gamma^t R(S_t,A_t) | S_0=s, \pi\right]$. The state-value function represents the expected discounted return when following the policy $\pi$ from state $s$. Importantly, the value function satisfy the following recursion, known as the Bellman equation, which is used in many current planning and learning methods for MDPs: $v^{\pi}(s) = \E\left[ R(s, a) + \gamma \int_{\St} T(s'|s,a) v^{\pi}(s') ds'\right].$

\textbf{Action Abstractions} Options \citep{sutton1999between} are a formalization of temporally-extended actions, or \textit{skills}, that are used by the agent to plan with a longer temporal scope than that allowed by primitive actions. An option $\opt$ is defined by the tuple $(I_{\opt}, \pi_{\opt}, \beta_{\opt})$ where $I_{\opt}: \St\rightarrow \{0,1\}$ is the initiation set, that is, the set of states in which the option can start execution; $\pi_\opt$ is the policy function, and $\beta_\opt: \St \rightarrow [0,1]$ is the termination probability function that indicates the probability of terminating the option execution at state $s$.

\textbf{Expected-length Model of Options} Generally, options are used to plan in Semi-Markov decision processes (SMDP; \cite{sutton1999between}), in which modelling jointly the option's dynamics $T$ and duration $\tau$ as $T_{\gamma}(s'|s, \opt) = \sum_{\tau=0}^\infty \gamma^\tau \text{Pr}(S_\tau=s', \beta(s_\tau)|S_0=s, \opt)$ and its reward as $R(s, \opt) = \E_{\tau}\left[\sum_{t=0}^{\tau-1} \gamma^t R(S_t, A_t) | s, \opt\right]$, result in the Multi-time model of options. However, we will use a simpler and more practical model of option's dynamics: the expected-length model of options \citep{abel2019expected}. In this case, the option's duration is modeled independently from the next-state distribution. More precisely, let $\tilde{\tau}_{\opt}$ be the average number of timesteps taken to execute the option $\opt$, then $T_\gamma(s'|s, \opt) = \gamma^{\tilde{\tau}_{\opt}} p(s'|s,\opt)$ where $p(s'|s, \opt)$ is the probability density function over the next-state observed when the option is executed as a black-box skill.


\textbf{State Abstractions and Probabilistic Groundings} State abstractions (or state aggregation) have commonly been defined in the form of non-injective functions $f: \St \rightarrow \bar{\St}$ where $\bar{\St}$ is an abstract state space. Recently, \cite{konidaris2018skills} propose probabilistic groundings to define a new class of state abstractions. These groundings are defined by $G: \bar{\St} \rightarrow \Delta(\St)$ and, contrary to state aggregation approaches, these can have overlapping support. That is, for a state $s$ and abstract states $\bar{s}^1$ and $\bar{s}^2$, we can have that $G_{\bar{s}^1}(s) > 0$ and $G_{\bar{s}^2}(s) > 0$. In state aggregation methods, one state has just one abstract state to map to. Therefore, this provides a more expressive framework to build abstractions.

\section{Value-preserving Abstract MDPs}
To plan with a set of options, we must build a model of their effects. In this section, we formalize this model as an MDP with the following characteristics:
(1) \textbf{Action Abstraction}, the action space is the set of task-relevant temporally-extended skills (i.e., the ground actions are not used for planning);
(2) \textbf{State Abstraction}, because the set of skills operate at a higher-level of abstraction, the observation space will contains more information than required to plan with the skills;
(3) \textbf{Sufficient for Planning}, the model must support computing a plan with the option set for task-specific rewards. In the case of abstract MDPs, the abstract model must guarantee \textit{accurate} trajectory simulations to leverage the planning and RL algorithms developed for MDPs.


\subsection{Ground and Abstract MDPs}

We start by defining the ground MDP $M$, the environment that the agent observes by only executing the options. 

\begin{definition}[Ground MDP]
Let $\mathcal{O}$ be a set of options defined over the agent's state-action space. The ground MDP is $M = (\St, \mathcal{O}, T, R, \gamma, \tau, p_0)$. $T(s'|s, \opt)$ is the next-state probability density function seen by the agent when executing option $\opt$ at $s$ and its accumulated discounted reward is $R(s, \opt) = \E_{\tau}\left[\sum_{t=1}^\tau \gamma^t R(S_t, A_t) | s, o\right]$, and $\tau:\St\times\mathcal{O}\rightarrow [0,\infty)$ is the expected option's execution time of option $\opt$ when initiated at state $s$.\footnote{The ground MDP would be an SMDP if we used the multi-time model of options \citep{sutton1999between}.}
\end{definition}

\begin{definition}[Abstract MDP]
The abstract MDP is $\bar{M} = (\bar{\St}, \mathcal{O}, \bar{T}, \bar{R}, \gamma, \bar{\tau},\bar{p_0})$ where $\bar{\St}$ is the abstract state space, $\bar{T}: \bar{\St}\times O\rightarrow \Delta(\bar{S})$ is the abstract transition kernel, $\bar{R}: \bar{\St}\times \mathcal{O} \rightarrow \R$ is the abstract reward function, $\gamma$ is the discount factor, $\bar{\tau}:\bar{\St}\times \mathcal{O}\rightarrow [0, \infty)$ is the option's duration model and $\bar{p}_0$ is the initial abstract state distribution.
\end{definition}

Given that the objective is to compute plans in the abstract model, we will only consider policies of the form $\pi: \bar{\St}\rightarrow\mathcal{O}$ in the rest of the paper. Moreover, to connect the abstract MDP to the ground MDP, we use a grounding function defined in terms of probability density functions, as introduced by \cite{konidaris2018skills}. The grounding of an abstract state $\bar{s}$ is defined by the probability of the agent being in a state $s$.

\begin{definition}[Grounding function]
Let $M$ be a ground MDP and $\bar{M}$ be an abstract MDP. A grounding function $G: \bar{\St} \rightarrow \Delta(\St)$ maps $\bar{s}$ to probability measures over $\St$ of $M$. 
Given an abstract state $\bar{s}$, we denote by $G_{\bar{s}}$ its grounding probability density.
We will denote the tuple $(M, \bar{M}, G)$ as a grounded abstract model.
\end{definition}

\subsection{The Dynamics Preserving Abstraction}
Our goal is to build an abstract model that enables the agent to simulate trajectories as though it had access to a simulator of the ground model. To achieve this, we establish two key distributions: the future state distribution and the grounded future state distribution.
\begin{definition}[Future State Distribution]
    Let the tuple $(M, \bar{M}, G)$ be a grounded abstract model. Let the future state distribution be $B_t$, and defined recursively as follows,
    \begin{align*}
        &B_0(s_0) = p_0(s_0); \\
        &B_t(s_t, ..., s_0|\opt_0, ..., \opt_{t-1}) = T(s_t|s_{t-1}, \opt_{t-1})B_{t-1}(s_{t-1}, ..., s_0 | \opt_0, ..., \opt_{t-2});
    \end{align*}
    and the grounded future state distribution $\bar{B}_t$ is the estimate obtained by grounding the estimate obtained by simulating trajectories in the abstract model $\bar{M}$
    \begin{align*}
        P(s_t, \bar{s}_t, ..., s_0, \bar{s}_0|\opt_0, ..., \opt_{t-1}) &= G_{\bar{s}_t}(s_t)\bar{T}(\bar{s}_t|\bar{s}_{t-1}, \opt_{t-1})P_{t-1}(s_{t-1}, \bar{s}_{t-1}, ..., s_0,\bar{s}_0 | \opt_0, ..., \opt_{t-2}); \\
        \bar{B}_t(s_t, ..., s_0|\opt_0, ..., \opt_{t-1}) &= \int P(s_t, \bar{s}_t, ..., s_0, \bar{s}_0|\opt_0, ..., \opt_{t-1})d\bar{s}_0...\bar{s}_t;
    \end{align*}
\end{definition}
\vspace{-0.3cm}
Hence, we say that when $B_t(s_t,\dots,s_0|\opt_0,..., \opt_{t-1}) = \bar{B}_t(s_t,\dots,s_0|\opt_0,..., \opt_{t-1})$, then simulating a trajectory in the abstract model is the same as in the ground model. To satisfy this, we can build an abstract model based on dynamics-preserving abstractions.\footnote{We defer all proofs to Appendix \ref{appendix: proofs}.}
\begin{definition}[Dynamics Preserving Abstraction]\label{def:dynamics}
    Let $\phi$ be a mapping $\phi:\St\rightarrow \mathcal{Z} \subseteq \R^{d_z}$ for some dimension $d_z\in \mathbb{N}$, typically with $d_z \ll d_s$. If for all $\opt \in \mathcal{O}$ and all $s\in \St$ that are reachable with probability greater than $0$, the following holds,
    \begin{align}
        T(s'|s, \opt) &= T(s'|\phi(s), \opt);\\
        \text{Pr}(I_\opt = 1|s) &= \text{Pr}(I_\opt=1|\phi(s));
    \end{align}
    where, $I_\opt$ is an indicator variable corresponding to the option's initiation set. Then, we say that $\phi$ is dynamics-preserving. That is, the information in $\phi(s)$ is sufficient to predict the option's effect and determine if an option is executable.
\end{definition}
This is similar to model-preserving abstractions \citep{li2006towards} and bisimulation \citep{givan2003equivalence, ferns2004metrics}. However, 1) it is stronger in the sense that $z$ must be a sufficient statistic for next-state prediction, and more importantly, 2) this does not impose a condition over the ground reward function. Because we want to build an abstract model to be re-used for task-specific rewards (as we will see in Section \ref{sec:goal-based}), the ground reward function is considered as a way to measure the cost (negative reward) of executing a skill---retaining Markovianity with respect to the ground reward function would limit how much information can potentially be abstracted away.

We will now build a sensible abstract MDP $\bar{M}$, as follows. Let $\phi:\St\rightarrow \mathcal{Z}$ be a dynamics-preserving abstraction. Given that $T(s'|s, \opt) = T(s'|z, \opt)$, where $\phi(s) = z$, then we can build a transition function in $\mathcal{Z}$-space, $T(z'|z,\opt)$, and a grounding function $G$, that can let us reconstruct $T(s'|z,\opt)$.
\begin{align*}
    p_0(z) &= \int p_0(s)\mathds{1}[\phi(s)=z]ds;\\
    T(z'|z,\opt) &= \int T(s'|z, \opt)\mathds{1}[\phi(s')=z'] ds'; \\
    G(s'|z,\opt, z') &= \begin{cases}
        \frac{p_0(s') \mathds{1}[\phi(s')=z']}{p_{0}(z')} & \text{if } z' \text{is an initial state (there is not previous } (z, o)\text{)}\\
        \frac{T(s'|z, \opt) \mathds{1}[\phi(s')=z']}{T(z'|z,\opt)} & \text{otherwise}
    \end{cases};
\end{align*}
Given that just knowing $z$ is not enough to determine its grounding distribution, we can build an abstract state space $\bar{\St} \triangleq \mathcal{Z}\times \mathcal{O} \times \mathcal{Z}$ of transition tuples---with special values $z_{\False}$ and $\opt_{\False}$ to form $\bar{s}_0 = (z_{\False}, \opt_{\False}, z_0)$ for initial abstract states. Let $\bar{s} = (\hat{z}, \hat{o}, \hat{z}')$ and $\bar{s}' = (z, o, z')$ be two abstract states in $\bar{\St}$, we define the abstract MDP functions in this new $\bar{\St}$, as follows.
\begin{align*}
    G_{\bar{s}}(\cdot) &= G(\cdot|\hat{z}, \hat{\opt}, \hat{z}');\\
    \bar{T}(\bar{s}'|\bar{s}, \opt) &= 
    \begin{cases}
        T(z'|z,\opt) & \text{if } \hat{z}' = z\\
        0 & \text{otherwise}
    \end{cases}; \\
        \bar{R}(\bar{s}, \opt) &= \E_{s \sim G_{\bar{s}}}\left[R(s, \opt) \right]; \text{\hspace{0.5cm}}
    \bar{\tau}(\bar{s},\opt) = \E_{s\sim G_{\bar{s}}}\left[\tau(s, \opt)\right];\\
\end{align*}
That is, if the tuples corresponding to $\bar{s}$ and $\bar{s}'$ are not compatible, we define its transition probability as $0$, and we define the abstract reward and abstract option's execution length as their corresponding expected values under the grounding function. Finally, the following theorem formally states that this construction is sound.
\begin{theorem}
    Let the tuple $(M, \bar{M}, G)$ be a grounded abstract model and a function $\phi: \St\rightarrow \mathcal{Z}\subseteq\R^{d_z}$. The model satisfies that $B_t(\cdot \mid \opt_0,..., \opt_{t-1}) = \bar{B}_t(\cdot \mid \opt_0,..., \opt_{t-1})$ if and only if $\phi$ is dynamics-preserving.
\end{theorem}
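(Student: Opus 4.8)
The plan is to prove both implications by first deriving an explicit closed form for the grounded future-state distribution $\bar B_t$ and comparing it term-by-term with $B_t = p_0(s_0)\prod_{k=1}^{t} T(s_k\mid s_{k-1},\opt_{k-1})$, which is what the recursion for $B_t$ unrolls to. The central object is the claim that, for the abstract model constructed from $\phi$, one has $\bar B_t(s_t,\dots,s_0\mid \opt_0,\dots,\opt_{t-1}) = p_0(s_0)\prod_{k=1}^{t} T(s_k\mid \phi(s_{k-1}),\opt_{k-1})$; i.e. the grounded rollout predicts each next state using only the abstraction $\phi(s_{k-1})$ of the previous state rather than $s_{k-1}$ itself. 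With this identity in hand the theorem is almost immediate: the two products agree for every trajectory and every option sequence precisely when $T(s_k\mid s_{k-1},\opt_{k-1}) = T(s_k\mid \phi(s_{k-1}),\opt_{k-1})$ on the reachable set, which is condition (1) of Definition \ref{def:dynamics}.

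First I would establish the closed form by induction on $t$. Writing the joint $P_t$ over interleaved ground and abstract trajectories as the product $\bar p_0(\bar s_0)G_{\bar s_0}(s_0)\prod_{k\ge 1}\bar T(\bar s_k\mid \bar s_{k-1},\opt_{k-1})G_{\bar s_k}(s_k)$, the key is the disintegration identity built into the construction: for a non-initial step, $\bar T(\bar s_k\mid\bar s_{k-1},\opt_{k-1})\,G_{\bar s_k}(s_k) = T(z_k\mid z_{k-1},\opt_{k-1})\cdot \frac{T(s_k\mid z_{k-1},\opt_{k-1})\mathds{1}[\phi(s_k)=z_k]}{T(z_k\mid z_{k-1},\opt_{k-1})} = T(s_k\mid z_{k-1},\opt_{k-1})\,\mathds{1}[\phi(s_k)=z_k]$, so that the abstract transition normalizer cancels the denominator of the grounding. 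An identical cancellation at $k=0$ gives $\bar p_0(\bar s_0)G_{\bar s_0}(s_0) = p_0(s_0)\mathds{1}[\phi(s_0)=z_0]$. Integrating the resulting expression over the abstract coordinates $z_0,\dots,z_t$ (the remaining components of each $\bar s_k$ being pinned by the consistency and $\opt$-matching support of $\bar T$), each indicator $\mathds{1}[\phi(s_k)=z_k]$ collapses the integral by setting $z_k=\phi(s_k)$, and substituting these into the surviving factors $T(s_k\mid z_{k-1},\opt_{k-1})$ yields the claimed form.

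Given the closed form, sufficiency is direct: if $\phi$ is dynamics-preserving then condition (1) lets me replace $T(s_k\mid\phi(s_{k-1}),\opt_{k-1})$ by $T(s_k\mid s_{k-1},\opt_{k-1})$ at every reachable $s_{k-1}$, so $\bar B_t=B_t$; condition (2) is used to guarantee that the set of option sequences with nonzero probability is the same in both models, since it makes ``$\opt$ is initiable'' a $\phi$-measurable event and hence correctly represented by the abstract initiation set. For necessity I would run the comparison at $t=1$ first: $B_1=\bar B_1$ forces $p_0(s_0)T(s_1\mid s_0,\opt_0)=p_0(s_0)T(s_1\mid\phi(s_0),\opt_0)$, and since the right-hand side depends on $s_0$ only through $\phi(s_0)$, this yields condition (1) for every $s_0$ in the support of $p_0$. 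An induction on $t$ then propagates the conclusion to all states reachable with positive probability, exactly the qualifier in Definition \ref{def:dynamics}; the failure of condition (2) is ruled out by a support argument, since two ground states sharing an abstraction but disagreeing on whether $\opt$ is initiable would let the grounded model place mass on a trajectory that is infeasible in $M$.

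I expect the main obstacle to be rigor rather than ideas: the quantities $p_0(z)$, $T(z'\mid z,\opt)$ and the groundings $G$ are defined through the indicators $\mathds{1}[\phi(s)=z]$, which are genuinely statements about pushforwards of measures under $\phi$ and the corresponding disintegrations, so the ``integrate out $z_k$'' steps must be justified as conditioning and marginalization identities rather than naive density manipulations (in particular, where $T(z'\mid z,\opt)=0$ the grounding is undefined and those trajectories carry no mass). The second delicate point is making the role of the initiation sets precise: the future-state distributions as written condition only on an option sequence, so I would need to fix the convention that a sequence contributes only along trajectories where each option is initiable, and then show that condition (2) is exactly what equates the ground and grounded supports.
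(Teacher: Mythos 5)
Your proposal is correct and follows essentially the same route as the paper's proof: both directions rest on the explicit construction of $\bar{T}$ and $G$ from $\phi$, the cancellation $G_{\bar{s}'}(s')\,\bar{T}(\bar{s}'\mid\bar{s},\opt) = T(s'\mid z,\opt)\,\mathds{1}[\phi(s')=z']$, sufficiency by construction, and necessity by induction on $t$. Your choice to compare the full joint trajectory densities term-by-term (rather than the marginals $\Prob(s_i\mid \opt_0,\dots,\opt_{i-1})$ as the paper does) is a minor presentational difference that, if anything, makes the pointwise conclusion $T(s'\mid s,\opt)=T(s'\mid\phi(s),\opt)$ on the reachable set follow more directly.
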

This theorem states that if we learn a dynamics-preserving abstraction, we can simulate accurate trajectories in the abstract model. Therefore, planning in the abstract model is accurate, in the sense, that the value of an abstract state $v^\pi(\bar{s})$ computed using the abstract model is the same as the one would get by generating trajectories in the ground MDP and computing the expected value under grounding G, $\E_{s\sim G_{\bar{s}}}[v^\pi(s)]]$.

\begin{corollary}
Let the tuple $(M, \bar{M}, G)$ be a grounded abstract model. If the dynamics preserving property holds then the value of policy $\pi$ computed in abstract model $\bar{M}$ satisfies that $v^\pi(\bar{s}) = \E[v^\pi(s)|s\sim G_{\bar{s}}]$. That is, the grounded abstract model preserves the expected value under the grounding G.
\end{corollary}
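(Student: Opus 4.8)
The plan is to prove the corollary as a direct consequence of the preceding theorem, exploiting that the value function is nothing but an expected discounted return and that the theorem already equates the ground and grounded trajectory distributions. I would proceed by induction on a finite planning horizon $n$, establishing $v^\pi_n(\bar{s}) = \E_{s\sim G_{\bar s}}[v^\pi_n(s)]$ for every $n$ (where $v^\pi_n$ denotes the $n$-step truncated value and the ground value uses the lift of $\pi$, i.e.\ at a ground state it plays the option $\pi$ assigns to the corresponding abstract tuple), and then pass to the limit $n\to\infty$; boundedness of the reward by $R_{Max}$ together with $\gamma<1$ makes the truncated values converge to $v^\pi$, so the identity survives the limit.

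For the inductive step I would unroll one application of the abstract Bellman equation,
\[
v^\pi_{n+1}(\bar s) = \bar R(\bar s,\opt) + \gamma^{\bar\tau(\bar s,\opt)}\int \bar T(\bar s'\mid\bar s,\opt)\,v^\pi_n(\bar s')\,d\bar s', \qquad \opt=\pi(\bar s),
\]
and substitute the inductive hypothesis $v^\pi_n(\bar s') = \E_{s'\sim G_{\bar s'}}[v^\pi_n(s')]$. Writing $\bar s = (\hat z,\hat\opt,\hat z')$ and $\bar s' = (z,\opt,z')$, the key simplification is that $\bar T(\bar s'\mid\bar s,\opt)=T(z'\mid z,\opt)$ is supported on tuples with $z=\hat z'$, while $G_{\bar s'}(s') = T(s'\mid z,\opt)\,\mathds{1}[\phi(s')=z']/T(z'\mid z,\opt)$; integrating the product over $z'$ collapses the indicator and cancels the normaliser, leaving exactly $\int T(s'\mid z,\opt)\,v^\pi_n(s')\,ds'$. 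Since every $s$ in the support of $G_{\bar s}$ has $\phi(s)=\hat z'=z$, the dynamics-preserving identity $T(s'\mid s,\opt)=T(s'\mid\phi(s),\opt)$ rewrites this as $\int T(s'\mid s,\opt)\,v^\pi_n(s')\,ds'$, matching the transition term of the ground Bellman equation, and the reward term matches because $\bar R(\bar s,\opt)=\E_{s\sim G_{\bar s}}[R(s,\opt)]$ by construction. Equivalently, and closer to the theorem, I would unroll the full return and condition on the option sequence $\opt_0,\dots,\opt_{t-1}$ that $\pi$ generates; because an abstract state is the tuple $(\phi(s_{t-1}),\opt_{t-1},\phi(s_t))$, it is a deterministic function of the ground trajectory and the chosen options, so the option sequence is identically distributed in both models, and for each fixed sequence the theorem's $\bar B_t(\cdot\mid\opt_0,\dots,\opt_{t-1})=B_t(\cdot\mid\opt_0,\dots,\opt_{t-1})$ lets me replace grounded-abstract trajectory expectations by ground-trajectory expectations.

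The step I expect to be the main obstacle is the treatment of the expected-length discounting. The reward and transition terms pass through the grounding by pure linearity, but the per-step discount enters as $\gamma^{\bar\tau(\bar s,\opt)}$, and since $x\mapsto\gamma^{x}$ is convex it does not commute with the grounding expectation: in general $\gamma^{\bar\tau(\bar s,\opt)}=\gamma^{\E_{s\sim G_{\bar s}}[\tau(s,\opt)]}\neq\E_{s\sim G_{\bar s}}[\gamma^{\tau(s,\opt)}]$. The cumulative discount therefore coincides with the grounded ground-trajectory discount exactly only when the duration is preserved at the level of the abstraction, i.e.\ when $\tau(s,\opt)$ is (almost surely) constant on each grounding fibre $\{s:\phi(s)=z\}$, so that $\bar\tau(\bar s,\opt)=\tau(s,\opt)$ for $s\sim G_{\bar s}$; this is the natural reading once an option's duration is regarded as part of its dynamics, and I would make it explicit (or fold $\tau$ into the conditions of Definition \ref{def:dynamics}). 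Granting it, the discount factors match term by term, the induction closes, and $v^\pi(\bar s)=\E_{s\sim G_{\bar s}}[v^\pi(s)]$ follows; the remaining bookkeeping—convergence of the truncated values and measurability of the grounding integrals—is routine given $\gamma<1$ and the bound $R_{Max}$.
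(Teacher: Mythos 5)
Your proposal follows essentially the same route as the paper: expand the Bellman equation at $\bar{s}$, push the grounding expectation through the reward and transition terms by linearity, and use the decomposition $T(s'\mid s,\opt)=T(s'\mid\phi(s),\opt)=\E_{\bar{s}'\sim\bar{T}(\cdot\mid\bar{s},\opt)}[G_{\bar{s}'}(s')]$ to convert the ground next-state expectation into an abstract one. Two differences are worth noting. First, the paper's proof is a single-step manipulation whose last line substitutes $v^\pi(\bar{s}')$ for $\E_{s'\sim G_{\bar{s}'}}[v^\pi(s')]$ --- i.e.\ it invokes the claim at the successor state, which is only licensed if one reads the argument as showing that $\bar{s}\mapsto\E_{s\sim G_{\bar{s}}}[v^\pi(s)]$ satisfies the abstract Bellman equation and then appeals to uniqueness of its fixed point. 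Your explicit induction on the truncated horizon $n$ followed by the limit $n\to\infty$ discharges exactly this step without the appeal to uniqueness, so it is the more self-contained rendering of the same idea. Second, and more substantively, you are right to flag the discount term: the paper writes $\gamma^{\tau}$ inside the grounding expectation and silently replaces it with $\bar{\gamma}=\gamma^{\bar{\tau}(\bar{s},\opt)}$, but since $\bar{\tau}(\bar{s},\opt)=\E_{s\sim G_{\bar{s}}}[\tau(s,\opt)]$ and $x\mapsto\gamma^{x}$ is convex, $\gamma^{\bar{\tau}(\bar{s},\opt)}\geq\E_{s\sim G_{\bar{s}}}[\gamma^{\tau(s,\opt)}]$ with equality only when $\tau(\cdot,\opt)$ is essentially constant on the support of $G_{\bar{s}}$. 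The paper's proof does not address this, so the identity as stated holds exactly only under the additional assumption you propose (duration constant on grounding fibres, or folding $\tau$ into Definition~\ref{def:dynamics}); otherwise one gets an approximation whose error is controlled by the variance of $\tau$ under the grounding, consistent with the expected-length model of options being itself an approximation. Your treatment is therefore correct and, on this point, more careful than the paper's.
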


\begin{proof}
Given that we have that, by definition, $T(s'|s, \opt) = T(s'|\bar{s}, \opt) = \E_{\bar{s}' \sim \bar{T}(\cdot|\bar{s},\opt)}[G_{\bar{s}'}(s)]$. It follows that
\begin{align*}
    \E_{s\sim G_{\bar{s}}}[v^\pi(s)] &= \E_{s\sim G_{\bar{s}}}\left[ \E_{\opt \sim \pi} \left[ R(s, \opt) + \E_{s' \sim T(s'|s, \opt)}\left[\gamma^{\tau}v^{\pi}(s')\right]\right]\right] \\
    &= \E_{\opt \sim \pi}\left[ \E_{s\sim G_{\bar{s}}}\left[R(s, \opt)\right] + \E_{s\sim G_{\bar{s}}, s' \sim T(s'|s, \opt)}\left[\gamma^{\tau}v^{\pi}(s')\right]\right]\\
    &= \E_{\opt \sim \pi}\left[\bar{R}(\bar{s}, \opt) + \E_{\bar{s}' \sim \bar{T}(\cdot| \bar{s}, \opt)}\E_{s'\sim G_{\bar{s}}}\left[\bar{\gamma}v^{\pi}(s')\right]\right]\\
    &= \E_{\opt \sim \pi}\left[\bar{R}(\bar{s}, \opt) + \E_{\bar{s}' \sim \bar{T}(\cdot| \bar{s}, \opt)}\left[\bar{\gamma}v^{\pi}(\bar{s}')\right]\right] = v^\pi(\bar{s}).
\end{align*}    
\end{proof}
\vspace{-0.5cm}%
The Skills to Symbols framework \citep{konidaris2018skills} introduces the strong subgoal property to build grounded discrete symbols for sound classical planning. The next corollary proves that the strong subgoal is a special case of the dynamics preserving property when the appropriate abstraction function has finite co-domain. Therefore, we can build discrete dynamics preserving models if and only if the strong subgoal property holds.

\begin{corollary}
    Let the tuple $(M, \bar{M}, G)$ be a grounded abstract model. 
    Let the strong subgoal property \citep{konidaris2018skills} for an option $\opt$ be defined as, $\text{Pr}(s'|s, \opt) = \text{Pr}(s'|\opt)$.
    The dynamics preserving property holds with a finite abstract state space $\mathcal{Z} = [N]$ for some $N \in \mathbb{N}$  if and only if the strong subgoal property holds.
\end{corollary}

\section{Learning the Abstract Model}

\subsection{Information Maximization to Learn a Dynamics-Preserving $\phi$}

The mutual information (MI) between random variables $X$ and $Y$, $MI(X;Y)$,
measures the information that each variable holds about the other. 
We are interested in finding a function $\phi$ that is dynamics-preserving such that we can build our abstract MDP. By Definition \ref{def:dynamics}, we want to learn $\phi(s)$ that is maximally predictive of the effect of $\opt$ when executed in $s$ and to predict if option $\opt$ is executable. That is, we want to maximize the following:
\begin{align}
    \max_{\phi\in\Phi} MI(S', I; \phi(S), O) \equiv\max_{\phi\in\Phi} MI(S'; \phi(S), O) + MI(I; \phi(S)), \label{eq:objective}
\end{align}
where $\Phi$ is a class of functions that map the high-dimensional ground states to lower-dimensional space. $I$ is binary random variable for the initiation set prediction.  $S', S, O$ are random variables over the ground states $\St$ and the options set $\mathcal{O}$. 

In general, by the data processing inequality, $MI(S'; \phi(S), O)$ is upper-bounded by $MI(S'; S, O)$. 
Therefore, we can show that optimizing the above objective results in a bounded value loss when using the abstract model to plan. To see this, we first note that by compressing through $\phi$, we lose information $\Delta MI \triangleq MI(S';S, O) - MI(S'; Z, O)$, where $Z=\phi(S)$, in the transition dynamics simulation. 
We show that,
\begin{align*}
    \Delta MI  
    \overset{(a)}{=} \E_{p(s)} \left[ D_{KL}\left(T(s'|s, \opt)|| \tilde{T}(s'|z, \opt)\right)\right] 
     \overset{(b)}{\geq}2\ln 2\cdot\E_{p(s)}\left[ \lVert T(s'|s, \opt) - \tilde{T}(s'| z, \opt)\rVert_1^2\right].
\end{align*}

where $p(s)$ is a distribution over $s$ that will depend on the data collection policy and (a) follows from the definition of the KL divergence and (b) from the well-known bound relating the KL divergence and L1 norm\footnote{$D_{KL}(P, Q) \geq 2\ln2 \cdot\lVert P-Q\rVert_1^2$}. Therefore, the error in the learned transition dynamics is minimized by our objective and this implies, by the following theorem,  that this objective also minimizes the value loss resulting from the approximation.

\begin{theorem}[Value Loss Bound]\label{thm:value-loss}
    Let $(M, \bar{M}, G)$ be a grounded abstract model and $\tilde{T}(s'|\bar{s}, \opt) = \int G_{\bar{s}'}(s') \bar{T}(\bar{s}'|\bar{s}, \opt)d\bar{s}'$ be the approximate transition dynamics from the grounded model. If the following conditions hold for all $\opt \in \mathcal{O}$ and all $s\in \St$ with $G_{\bar{s}}(s) > 0$: (1)
        $\lVert T(s'|s, \opt) - \tilde{T}(s'|\bar{s},\opt)\rVert_1^2 \leq \epsilon_T$, and 
        (2)$\lvert R(s, \opt) - \bar{R}(\bar{s}, \opt)\rvert^2 \leq \epsilon_R$;
    then, for any policy $\pi$,
    \begin{equation*}
        \lvert Q^\pi(s, \opt) - Q^\pi(\bar{s}, \opt)\rvert \leq \frac{\sqrt{\epsilon_R} + \gamma V_{Max}\sqrt{\epsilon_T}}{1-\gamma}.
    \end{equation*}
\end{theorem}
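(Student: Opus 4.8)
The plan is to prove this with a standard simulation-lemma argument: I would write the Bellman recursions for $Q^\pi$ in the ground and grounded-abstract models, subtract them, and convert the resulting pointwise inequality into a self-referential bound on the uniform error that can then be solved. Concretely, define the uniform errors $\Delta_Q \triangleq \sup_{\opt,\bar s, s}\lvert Q^\pi(s,\opt) - Q^\pi(\bar s,\opt)\rvert$ and $\Delta_V \triangleq \sup\lvert V^\pi(s) - V^\pi(\bar s)\rvert$, where the suprema range over options and over pairs $(s,\bar s)$ with $G_{\bar s}(s) > 0$. Since $V^\pi$ is the $\pi$-average of $Q^\pi$ over options, $\Delta_V \le \Delta_Q$, and I will use $\lVert V^\pi\rVert_\infty \le V_{Max}$.

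First I would expand $Q^\pi(s,\opt) = R(s,\opt) + \gamma\int T(s'|s,\opt)V^\pi(s')\,ds'$ and $Q^\pi(\bar s,\opt) = \bar R(\bar s,\opt) + \gamma\,\E_{\bar s'\sim\bar T(\cdot|\bar s,\opt)}[V^\pi(\bar s')]$ and subtract. The reward discrepancy is handled immediately by condition (2), giving $\lvert R(s,\opt) - \bar R(\bar s,\opt)\rvert \le \sqrt{\epsilon_R}$ after passing to the square root. The crux is the transition term, and the key algebraic move is to insert the grounded kernel $\tilde T$ by splitting $\E_{\bar s'\sim\bar T}[V^\pi(\bar s')]$ into $\E_{\bar s'\sim\bar T}\!\big[\E_{s'\sim G_{\bar s'}}[V^\pi(s')]\big] + \E_{\bar s'\sim\bar T}\!\big[V^\pi(\bar s') - \E_{s'\sim G_{\bar s'}}[V^\pi(s')]\big]$. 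By the definition $\tilde T(s'|\bar s,\opt) = \int G_{\bar s'}(s')\bar T(\bar s'|\bar s,\opt)\,d\bar s'$, the first summand equals $\int \tilde T(s'|\bar s,\opt)V^\pi(s')\,ds'$, so it can be compared against the ground transition acting on the same value function; H\"older's inequality together with condition (1) bounds this by $\lVert T(s'|s,\opt) - \tilde T(s'|\bar s,\opt)\rVert_1\,\lVert V^\pi\rVert_\infty \le \sqrt{\epsilon_T}\,V_{Max}$. The second summand is an average of $V^\pi(\bar s') - V^\pi(s')$ over $s'\sim G_{\bar s'}$, hence bounded by $\Delta_V \le \Delta_Q$.

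Collecting the three contributions yields $\Delta_Q \le \sqrt{\epsilon_R} + \gamma\big(\sqrt{\epsilon_T}\,V_{Max} + \Delta_Q\big)$, and solving this self-referential inequality for $\Delta_Q$ produces exactly the claimed bound $\frac{\sqrt{\epsilon_R} + \gamma V_{Max}\sqrt{\epsilon_T}}{1-\gamma}$.

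The main obstacle is the value-coupling summand: this is precisely the place where the exact identity $V^\pi(\bar s') = \E_{s'\sim G_{\bar s'}}[V^\pi(s')]$ established in the preceding corollary breaks down in the approximate regime, and the argument only closes once one recognizes that its residual is itself controlled by $\Delta_V$, which is what produces the contraction factor $\gamma$ and the $\frac{1}{1-\gamma}$ horizon term. A secondary subtlety is bookkeeping: the hypotheses are stated on squared quantities, so I must take square roots before invoking H\"older and the triangle inequality, and I should note that under the expected-length option model the per-step discount is $\gamma^{\bar\tau}\le\gamma$, so using $\gamma$ in the recursion only makes the bound more conservative and does not affect its validity.
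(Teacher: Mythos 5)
Your proposal is correct and follows essentially the same route as the paper's proof: the same decomposition into a reward error term (bounded by $\sqrt{\epsilon_R}$), a transition term controlled via H\"older's inequality as $\lVert T-\tilde T\rVert_1\,\lVert V^\pi\rVert_\infty \le \sqrt{\epsilon_T}\,V_{Max}$, and a recursive value-coupling term that produces the $\frac{1}{1-\gamma}$ horizon factor. The only difference is presentational---the paper unrolls the recursion by induction on $n$-step Bellman iterates and sums the resulting geometric series, whereas you solve the self-referential inequality $\Delta_Q \le \sqrt{\epsilon_R} + \gamma(\sqrt{\epsilon_T}V_{Max} + \Delta_Q)$ directly, which additionally requires the (easy but worth stating) observation that $\Delta_Q \le 2V_{Max} < \infty$ so that $\gamma\Delta_Q$ may legitimately be moved to the left-hand side.
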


\subsection{Contrastive Abstract Model Learning}

We maximize the previous Infomax objective (\ref{eq:objective}) as follows. The term $MI(I;Z)$ reduces to a cross entropy loss, so we will focus on estimating the term $MI(S';Z,O)$: 
%
we can prove that maximizing both sides of the identity $MI(Z';Z,O) = (MI(S';Z') - MI(S';Z'|Z,O))$ implicitly maximizes $MI(S'; Z, O)$ (see extended derivation details in Appendix \ref{appendix:tpc-derivation}). Intuitively, the first term $MI(Z';Z,O)$ makes $z'$ predictable from knowing the option executed and the previous $z$. The second term avoids collapsing $\phi$ to a trivial solution: maximizing $MI(S';Z') - MI(S';Z'|Z,O)$ makes $\phi$ retain information about the ground state $s$ (avoiding collapse of the representation) that is maximally predicted by the previous $(z,o)$. 

\begin{wrapfigure}{r}{0.5\linewidth}
  \begin{minipage}{.5\textwidth}
    \begin{algorithm}[H]
      \caption{Planning and Learning with an Abstract Model}
      \label{alg:planning}
      \begin{algorithmic}[1]
        \REQUIRE Agent $\pi$, Ground Environment M, \\ Abstract Model $\bar{M}$, Goal $\mathcal{G}$
        \STATE Initialize dataset $\mathcal{D}$ by rolling out $N$ trajectories
        \STATE $\bar{M}\leftarrow$ PretrainAbstractMDP($\mathcal{D}$)
        \STATE $\bar{M}\leftarrow$ MakeTaskMDP($\bar{M}$, $\mathcal{G}$)
        \WHILE{true}
          \STATE $\mathcal{D}$ $\leftarrow$ Roll out for $L$ steps. 
          \IF{$H$ steps have passed}
            \STATE $\bar{M}\leftarrow$ TrainModel($\bar{M}$,$\mathcal{D})$
            \STATE $\pi\leftarrow$ TrainAgentImagination($\bar{M}$, $\pi$)
          \ENDIF
        \ENDWHILE
      \end{algorithmic}
    \end{algorithm}
  \end{minipage}
  \vspace{-0.5cm}
\end{wrapfigure}
We choose to maximize these mutual information terms contrastively using InfoNCE \citep{oord2018cpc} to avoid making assumptions about tractable density models (other MI estimators \citep{poole2019variational, alemideep, belghazi2018mutual} can be used). Using these estimators allows the model to implicitly learn complex grounding functions that improve the quality of the abstract state space.
Note that by using InfoNCE for the terms above, this algorithm corresponds to Temporal Predictive Coding (TPC; \cite{nguyen2021temporal}) which proposes abstract states without reconstruction objectives. Therefore, our formulation corresponds to the TPC algorithm in the degenerate case of options being the primitive actions.\footnote{Extended discussion in Appendix \ref{appendix:tpc-derivation}}

In practice, we assume that we have access to a dataset of transition samples $\mathcal{D} = \{(s_i, \opt_i, r^\gamma_i, s'_i, \tau_i, I_i)\}_{i=1}^N$ that correspond to the execution of option $\opt_i$ from state $s_i$, terminating in $s'_i$ with a duration of $\tau_i$ and accumulated return $r_i^\gamma = \sum_{t=0}^{\tau_i-1} \gamma^t r_t$. $I_i$ corresponds to the initiation sets of all options in state $s_i$. This dataset might be initialized by rolling out trajectories with a random agent and further enhanced during the agent's learning (see Algorithm \ref{alg:planning}).

We propose to learn the abstract model $M^\phi_\theta = (T^\phi_\theta, R_\theta, I^\phi_\theta, \tau_\theta)$ based on the abstraction $\phi$ parameterized by a function approximator $f_\phi$.  Notice, that because we need to guarantee good initiation sets by $MI(I; \phi(S))$, the initiation set loss also affects the learning of $f_\phi$:
\begin{align*}
    \mathcal{L}_\phi &= -MI_\phi(Z'; Z, O) - MI_\phi(S';Z'); \\
    \mathcal{L}_{\theta,\phi}^I &=  -\log I_\theta^\phi(I_i|f_\phi(s_i)); \\
    \mathcal{L}_{\theta, \phi}^T &= -\log\bar{T}_\theta(f_\phi(s'_i)|f_\phi(s_i), \opt_i);
\end{align*}
Therefore, $\mathcal{L}_\phi$, $\mathcal{L}_{\theta,\phi}^I$ and $\mathcal{L}_{\theta, \phi}^T$ are used to learn the abstraction function $f_\phi$. Moreover, to compensate for any imbalances in the data, we use a weighted negative log-likelihood loss for the initiation loss to learn an initiation classifier to be used during planning. 
To learn the rest of the model, we consider $f_\phi$ fixed and minimize the following losses and consider samples of the form $(s_{i-1}, \opt_{i-1}, s_{i}, \opt_{i}, r_i^\gamma, \tau_i)$ which can be obtained by slicing trajectories appropriately. We map them considering $f_\phi$ and minimize the following,
\begin{align*}
    \mathcal{L}_\theta^R &= (R_\theta(z_{i-1}, \opt_{i-1}, z_i, \opt_{i})-r^\gamma_i)^2; \text{\hspace{0.5cm}}
    \mathcal{L}_\theta^\tau = (\tau_\theta((z_{i-1}, \opt_{i-1}, z_i, \opt_{i})-\tau_i)^2;
\end{align*}
Finally, we minimize $\mathcal{L}_{\theta, \phi} = \beta_{\text{info}} \mathcal{L}_\phi + \beta_{I}\mathcal{L}_{\theta,\phi}^I + \beta_{T} \mathcal{L}_\theta^T + \beta_{R}\mathcal{L}_\theta^R + \beta_{\tau} \mathcal{L}_\theta^\tau$. In our experiments, all constants were $\beta_{\text{info}} = \beta_I = \beta_T = \beta_R = \beta_\tau = 1$. 
\subsection{Goal-based Planning with an Abstract Model}\label{sec:goal-based}
Consider a goal set $\mathcal{G}\subset S$ and $\mathcal{G}_\phi \subset \mathcal{Z}$, its mapping to $\mathcal{Z}$. In order to define the task MDP  $M_{\mathcal{G}}$ (Algorithm \ref{alg:planning}, Line 3) for the agent to plan in,
we define the task reward function for abstract state $\bar{s} = (\hat{z}, \hat{\opt}, z)$ as $R_{\mathcal{G}}(\bar{s}, \opt) = R_\theta(\bar{s}, \opt) + R_{\text{task}} \mathds{1}[z \in \mathcal{G}_\phi] $ where $R_{\text{task}}$ is the goal reward. The first term can be interpreted as the base cost/reward of executing a skill while the second term indicates to the agent the task-specific rewarding states. Moreover, we augment the transition dynamics and set all $z \in \mathcal{G}_\phi$ as terminating states by setting $\bar{T}_{\mathcal{G}}(z_{\text{done}}|z, \opt) = \mathds{1}[z\in \mathcal{G}_{\phi}]$.  
The agent uses the task MDP $\bar{M}_{\mathcal{G}}$ to simulate trajectories and improves its policy (Algorithm \ref{alg:planning}, Line 8) and it can rollout the policy in the environment to collect new data (Algorithm \ref{alg:planning}, Line 7) that further improves the abstract model.

\section{Experiments}

\textbf{Pinball environment} \citep{KonidarisSkillChaining09} This domain has a continuous state space  with position vector $(x,y) \in [0,1]^2$ and velocities $(\dot{x}, \dot{y}) \in [-1, 1]^2$. As opposed to its original formulation, we consider a variant with continuous actions that decrease or increase the velocity by $\Delta(\dot{x}, \dot{y}) \in [-1, 1]^2$. Moreover, we also consider the top view pixel observation of the environment as the agent's observation.
As options, we handcrafted position controllers implemented as PID controllers that move the ball in the coordinate directions by a fixed step size. 
\begin{wrapfigure}{r}{0.5\textwidth}
    \centering
    \includegraphics[width=0.4\textwidth]{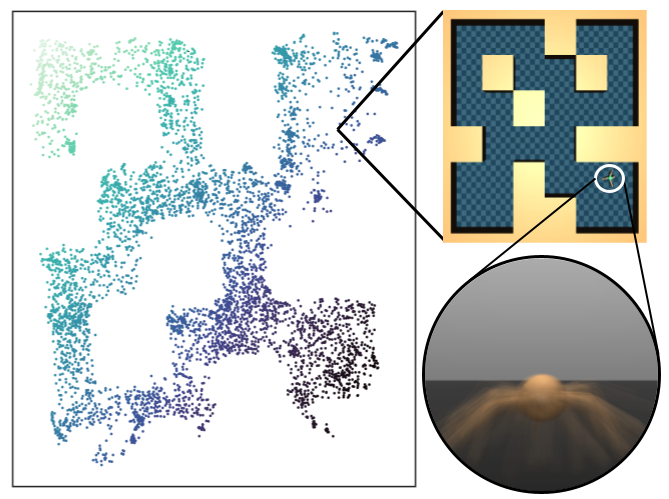}
    \caption{\textbf{Medium Antmaze}. 2D MDS projection of the learned $\phi$: it learns to represent the position in the maze. The average grounding shows possible configurations of the ant joints when it is in the represented position.\vspace{-0.5cm}}\label{fig:antmaze}
\end{wrapfigure}
\textbf{Antmaze} We consider the problem of controlling a Mujoco \citep{todorov2012mujoco, fu2020d4rl} Ant to navigate through a maze. The state space is a $29$-dimensional vector that contains the position of the ant in the maze and the ant's proprioception. We consider the Medium Play maze as defined by \cite{fu2020d4rl}.
We use $8$ options learned using TD3 \citep{fujimoto2018addressing} that move the ant in the coordinate directions (north, south, east, west and the diagonal directions) in the maze by a fixed step size.
 
\subsection{Abstract State Space Preserves Relevant Information for Planning}

\begin{figure}
    \centering
  \begin{subfigure}[b]{0.4\textwidth}
    \centering
    \includegraphics[width=0.7\textwidth]{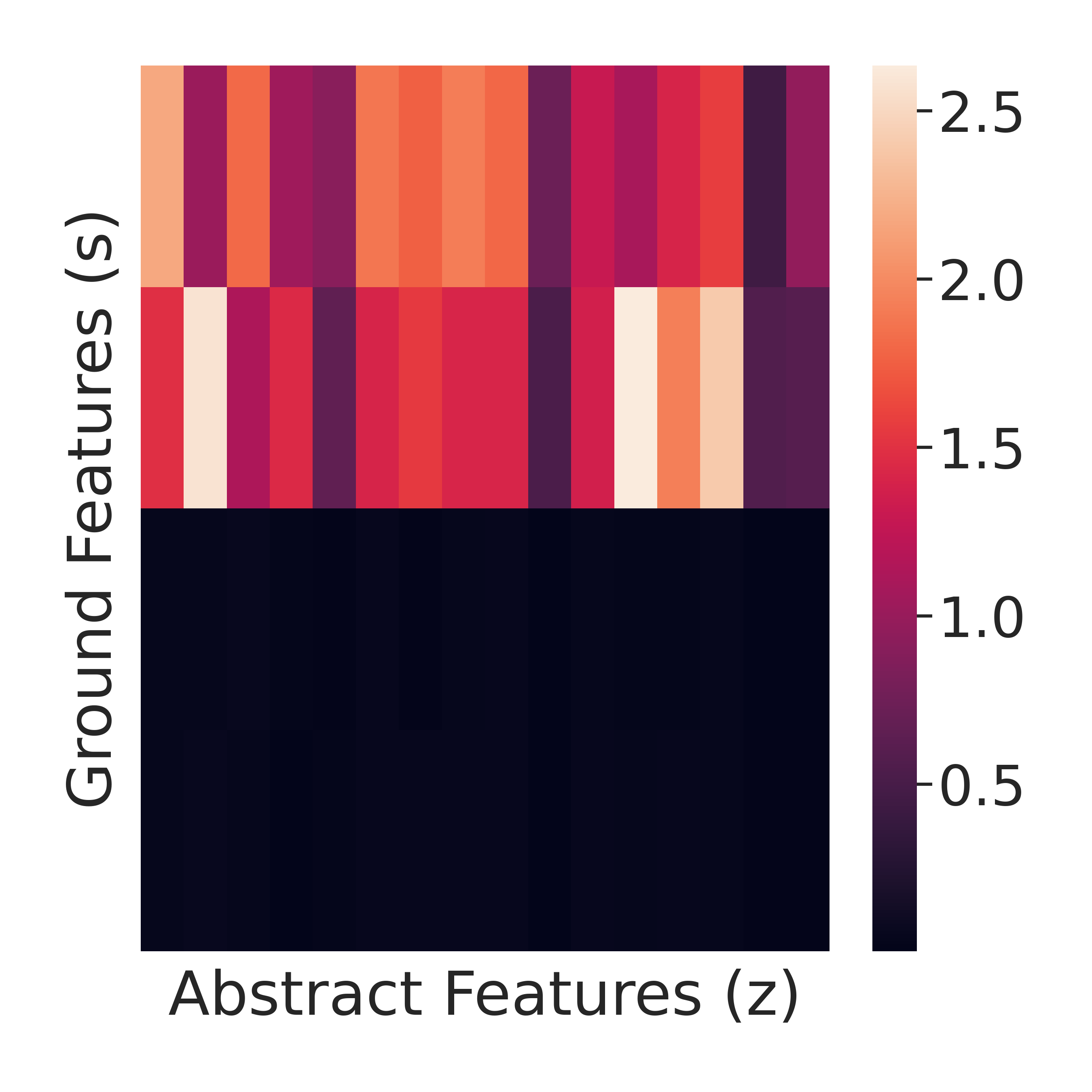}
    \caption{Pinball}\label{fig:pinball-mi}
  \end{subfigure}
  \begin{subfigure}[b]{0.4\textwidth}
    \centering
    \includegraphics[width=0.7\textwidth]{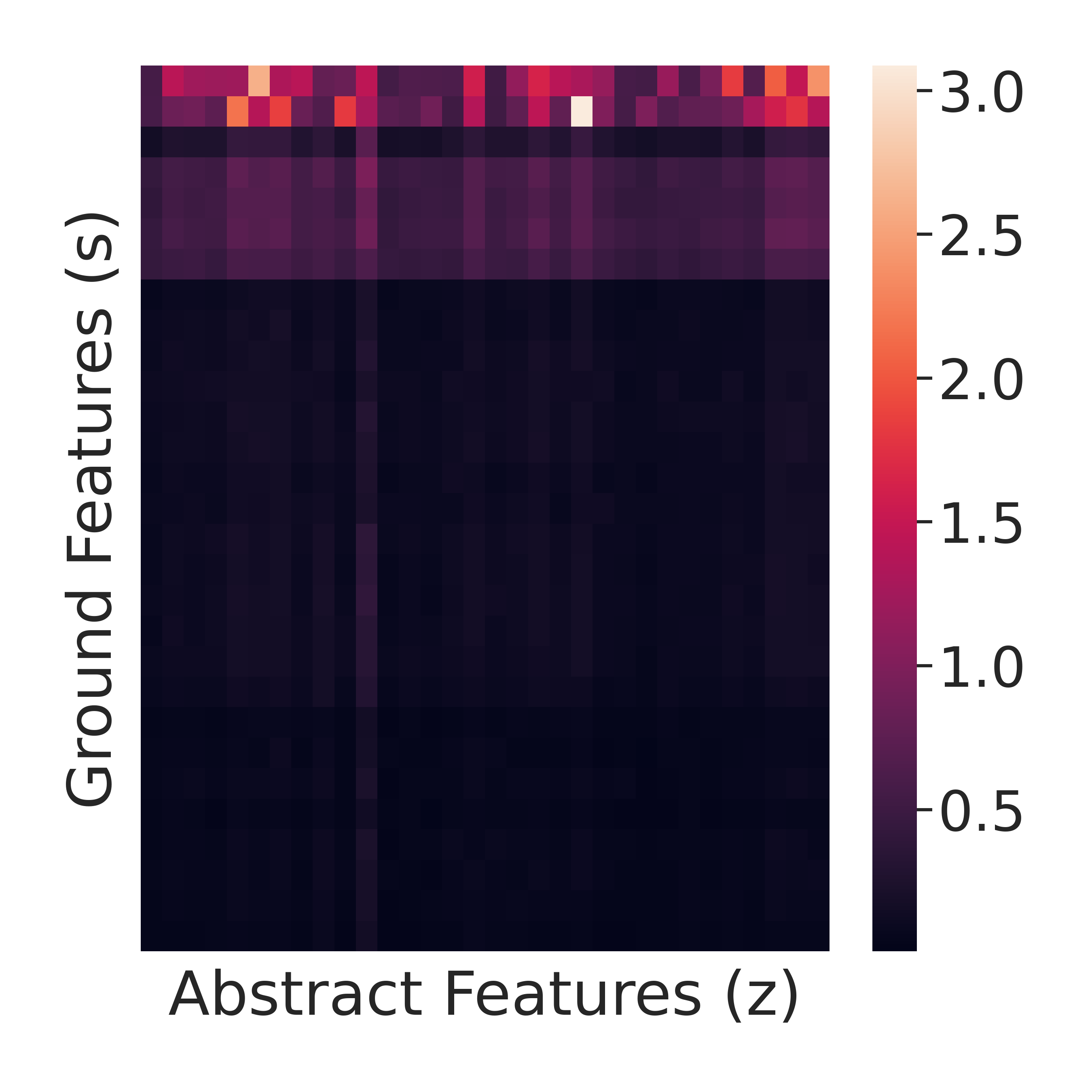}
    \caption{Medium Antmaze}\label{fig:antmaze}\label{fig:antmaze-mi}
  \end{subfigure}
\caption{MI matrix: ground features $s$ are in the vertical axis and abstract features $z$ are in the horizontal axis. High MI (first two rows) corresponds to the position of the ball or the ant.}\label{fig:mi}
\end{figure}

Our main hypothesis is that abstract actions drive state abstraction because the information needed to plan with a structured option set will be less than the ground perception space of the agent.
To quantify this, we measure the information contained in the abstract state space about the ground features by estimating the MI using non-parametric methods based on $k$-nearest neighbors \citep{kozachenko1987sample}. We use Scikit-learn implementation \citep{scikit-learn}. 
In Figure \ref{fig:pinball-mi}, we show the MI matrix between Pinball's ideal features (position and velocities) and the learned features from the pixel observations.
For Antmaze (Figure \ref{fig:antmaze-mi}), we purposely over-parameterized the abstract space to give enough capacity to learn the full observation, if necessary. However, we can see that features that are not necessary for planning with the skills are effectively abstracted away. 
In the case of Pinball only the first two dimensions corresponding to the ball position have high MI. In the Antmazes, similarly, the first $7$ dimensions have the highest MI which corresponds to position in the maze (first two dimensions) and orientation of the ant's torso.
Qualitatively, we can visualize the learned abstract state space using Multidimensional Scaling (MDS; \cite{borg2005modern}). Figure \ref{fig:antmaze} shows the abstract state space learned for the Antmaze and it reveals the pattern of the coordinate positions of the ant in the maze. Additionally, we show grounded observations that correspond to an abstract state: the ant at the represented position in the maze with many different configurations of the joints and torso.

\subsection{Planning with an Abstract MDP}

To evaluate the effectiveness of these models for multiple goal-based tasks, we pretrained abstract models and use them to plan in imagination using Double DQN \citep{van2016deep}: the DDQN agent rolls out imagined trajectories to improve its policy and then rolls it out in the ground environment to collect new data that is used to learn the task reward function (we keep fix the rest of the model).
As our baseline, we use DDQN tuned to learn a policy with the same options but interacting with the ground MDP.
In Figure \ref{fig:planning}, we show learning curves (success rate vs. ground environment steps) averaged over different goals and seeds. The error areas represent one standard deviation.

For the pinball domain we use pixel observations as input. In Figure \ref{fig:pinball-plan}, we compare learning curves averaged over $8$ goals and $5$ seeds where the gray area represent the number of samples used for pretraining phase of the model. These curves show that planning in the abstract model achieves similar performance to the same agent learning directly in the ground MDP which showcases the gain obtained in terms of sample efficiency.

Figure \ref{fig:maze-plan} shows an analogous plot for Antmaze ($9$ goals and $5$ seeds).
In this domain we provide additional results for state-of-the-art model-based RL methods: DreamerV2 and DreamerV3 \citep{hafner2020mastering, hafner2023mastering}. These methods have been shown to work in diverse domains by building (discrete) latent states based on reconstruction losses. However, their performance is limited in comparison to our abstract model: (1) notice that after the gray area our abstract model collects data only to improve the goal reward prediction, whereas the baselines continuously collect data that further improves their models which shows the sample efficiency afforded by our skill-driven abstraction, and (2) our simple DDQN agent learns faster in imagination that the more sophisticated planning agents of the baselines.

%
%
%
\begin{figure}[h]
  \centering
 \includegraphics[width=0.4\textwidth]
    {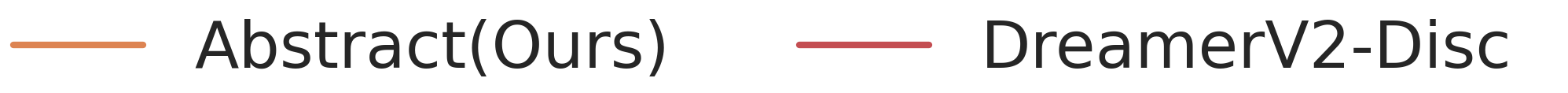}
    \includegraphics[width=0.4\textwidth]
    {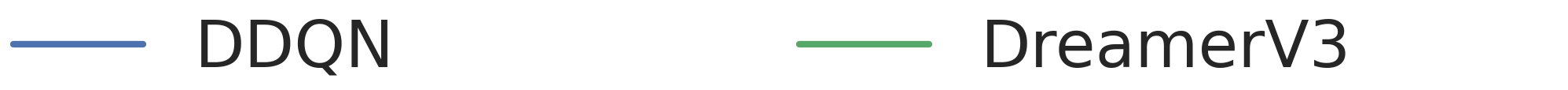}\\
  \begin{subfigure}[b]{0.49\textwidth}
    \centering
    \includegraphics[keepaspectratio=true, width=0.7\textwidth]
    {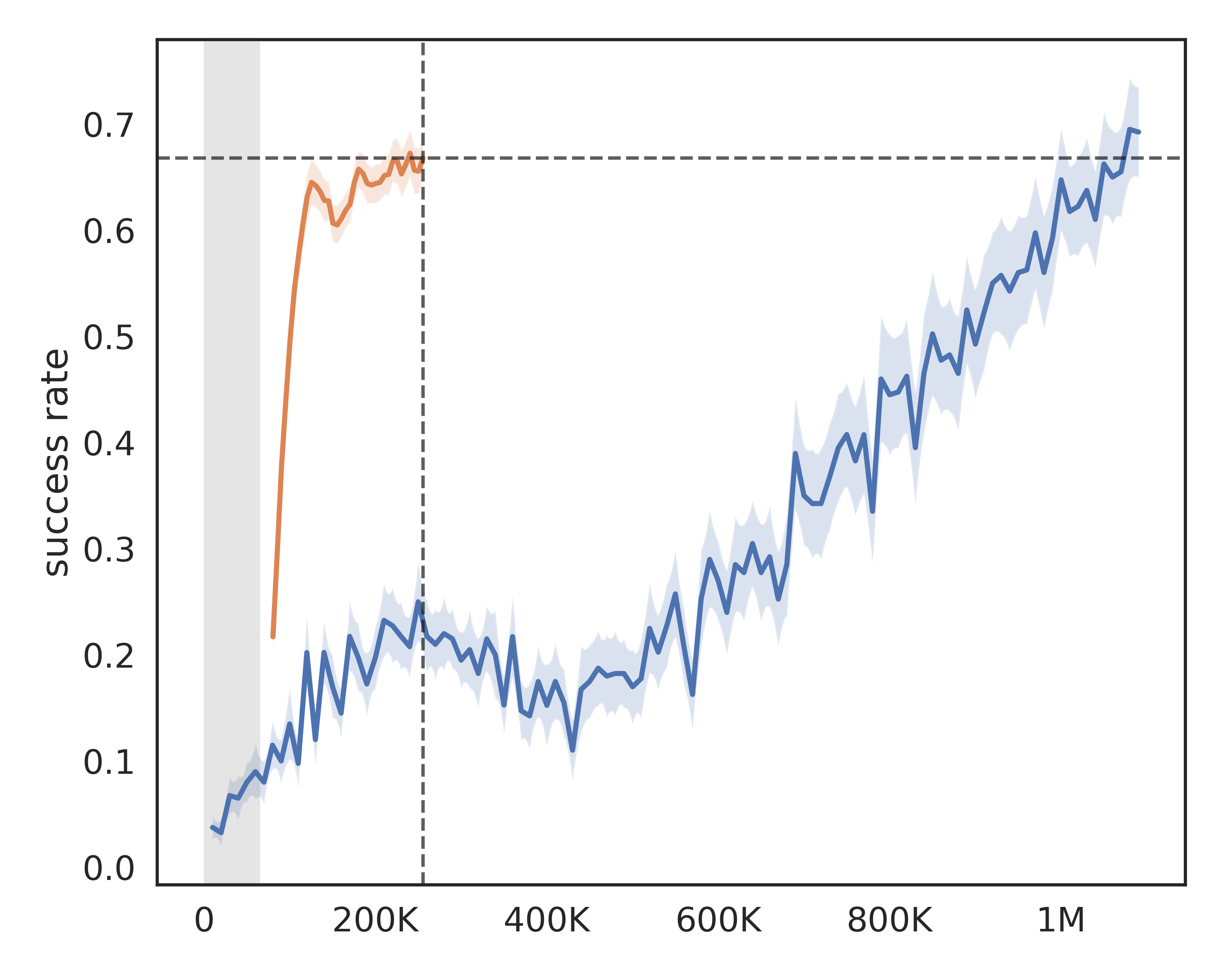}
    \caption{Pinball}\label{fig:pinball-plan}
  \end{subfigure}
  \begin{subfigure}[b]{0.49\textwidth}
    \centering
    \includegraphics[keepaspectratio=true, width=0.7\textwidth]
    {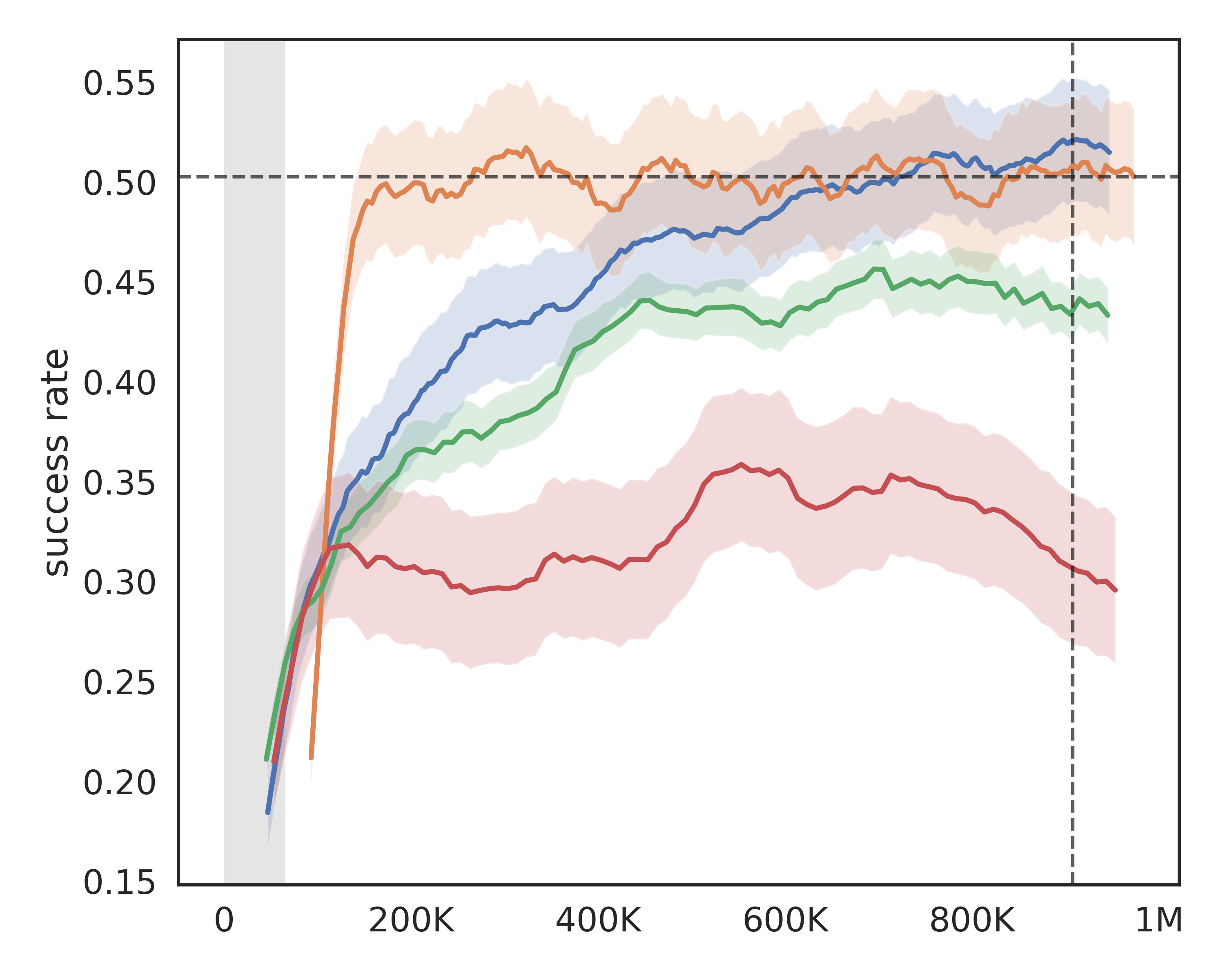}
    \caption{Antmaze}\label{fig:maze-plan}
  \end{subfigure}
  \caption{Planning with an abstract model. Success rate v. Environment steps averaged over goals and $5$ seeds. The gray area represents the offset for the steps needed to pre-train the model.\vspace{-0.5cm}}\label{fig:planning}
\end{figure}
\section{Related Works}
\vspace{-0.25cm}
\textbf{Grounded Classical Planning} 
\cite{konidaris2018skills} present a skill-driven method for constructing PDDL  predicates \citep{fox2003pddl2, younes2004ppddl1} for classical planning. This family of work formally bridges the options framework to classical planning, and recent work have extended this framework to work with portable skills \citep{James20} and object-centric skills \citep{James22}, and to ground natural language in robotics \citep{Gopalan17}. 
Importantly, this framework offers guarantees that the learned grounded symbols support sound planning.
A related body of work bridges deep learning with classical planning. \cite{Asai18, Asai19, Asai22} learn abstract binary representations to ground PDDL predicates and action operators from complex observations. Similarly, \cite{Ugur15a, Ugur15b} approach the grounding problem by clustering action effects to create discrete symbols for planning, and \cite{Ahmetoglu22} extends this approach to leverage deep learning methods. While these approaches manage to empirically work with complex observations, they do not offer formal guarantees that the symbols learned are sufficient for planning.
Our approach, while not applied to classical planning, generalizes abstract state learning to continuous cases, it is compatible with the deep learning toolbox and it is theoretically principled.

\textbf{Model-based RL and State Abstractions} Learning MDP models from experience has been extensively studied \citep{sutton1991dyna, deisenroth2011pilco} for their benefits in generalization, sample efficiency, and knowledge transfer. Recent successful approaches use deep networks to handle complex observations spaces and long-term reasoning \citep{krishnan2015deep,  ha2018recurrent, silver2018general, gregor2018temporal, buesing2018learning, zhang2019solar, hansen2022temporal, hansen2023td}. An important challenge of this approach is learning an effective abstract state space and, most of them, have focused in learning abstract representations of complex observations based on reconstruction losses \citep{gregor2018temporal, buesing2018learning, zhang2019solar, hafner2019learning, hafner2020mastering, hafner2023mastering}. 
In contrast, recent approaches have moved away from this idea and focused in minimal abstract state spaces relevant for acting such as value prediction \citep{silver2018general, grimm2020value, yue2023value}, Markov states \citep{gelada2019deepmdp, zhang2020learning, allen2021learning, nguyen2021temporal}, and controllability \citep{lamb2022guaranteed}. In fact, many of these  explicitly use information maximization and information bottleneck approaches that are theoretically justified by our work.

From a theoretical point of view, there is extensive research to characterize the types of state abstractions (or state aggregation) \citep{li2006towards, ferns2004metrics, castro2010bisim} that are useful for RL. More recent work characterizes \textit{approximate} state abstractions \citep{abel2016near, abel2018state} that guarantee bounded value loss and the type of options that are compatible with a given state abstraction to guarantee value preservation \citep{abel2020value}.

\textbf{Temporally-extended Models} MDP models with skills have been recently considered in skill discovery research. Some work approach the problem assuming that the abstract state space is a graph and options are learned to reach the initiation set of another option \citep{Bagaria20, Bagaria21a, Bagaria21b}. \cite{hafner2022deep} approaches the problem by building on the Dreamer algorithm \citep{hafner2019learning, hafner2020mastering, hafner2023mastering} and discover goals by abstracting over the learned abstract state. Similarly, \cite{nair2019hierarchical} use generative models for subgoal generation and skill learning, and plan with a learned model in observation space. Other approaches learn forward dynamics models for skills discovered from an offline set of trajectories but do not abstract the state based on these skills \citep{oposm2023, shi2023skill, zhang2023leveraging}. While our method assumes that the options are given, it does not impose discrete constraints to the abstract state space, does not need to model the state dynamics at the finest time step, and it builds a principled abstract state space.

\section{Conclusion}

We introduce a method for learning abstract world models, designed to have agents with effective planning capabilities for goal-oriented tasks. Our core premise is that an agent must be capable of building a reusable abstract model for planning with a given skill set. We do this in a principled manner by  characterizing the state abstraction that guarantees that planning in simulation guarantees bounded value loss. In other words, planning with a learned abstract model is sufficient to compute a policy for the real-world environment.

\subsubsection*{Acknowledgments}

We  would like to thank Sam Lobel, Akhil Bagaria, Saket Tiwari, and other members in the IRL at Brown for useful discussions during the development of this project. Moreover, we would like to thank David Abel for his contribution to the Value Loss theorem. 
This project was funded by NSF grant \#1955361, NSF CAREER \#1844960 to Konidaris, ONR grant \#N00014-22-1-2592. Partial funding for this work provided by The Boston Dynamics AI Institute (``The AI Institute'').

\bibliographystyle{plainnat}
\bibliography{ref.bib}


\newpage
\appendix

\section{Appendix}

\subsection{Proofs}\label{appendix: proofs}



\begin{theorem}
    Let the tuple $(M, \bar{M}, G)$ be a grounded abstract model and a function $\phi: \St\rightarrow \mathcal{Z}\subseteq\R^{d_z}$. The model satisfies that $B_t(\cdot \mid \opt_0,..., \opt_{t-1}) = \bar{B}_t(\cdot \mid \opt_0,..., \opt_{t-1})$ if and only if $\phi$ is dynamics preserving.
\end{theorem}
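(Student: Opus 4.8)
The plan is to reduce both directions to a single \emph{consistency identity} enforced by the way $\bar M$ and $G$ are built from $\phi$: writing $\bar s=(\hat z,\hat\opt,\hat z')$ and grounding its $\bar T$-pushforward, the construction makes
\[
\tilde T(s'\mid \bar s,\opt)\;=\;\int G_{\bar s'}(s')\,\bar T(\bar s'\mid \bar s,\opt)\,d\bar s'\;=\;T(s'\mid \hat z',\opt)
\]
a function of $s$ \emph{only through} the $z'$-component $\hat z'$, i.e.\ through $\phi(s)$ on the relevant support. This holds for \emph{any} $\phi$ by disintegration, since integrating the joint $G(s'\mid z,\opt,z')\,T(z'\mid z,\opt)$ over $z'$ returns the fibre-conditional ground transition $T(s'\mid z,\opt)$; I would phrase it this way precisely to avoid manipulating the indicators $\mathds 1[\phi(\cdot)=z']$ literally. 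Before using it I would record the unrolled recursions
\[
B_t(s_{0:t}\mid \opt_{0:t-1}) = p_0(s_0)\prod_{k=1}^{t} T(s_k\mid s_{k-1},\opt_{k-1}),
\]
\[
\bar B_t(s_{0:t}\mid \opt_{0:t-1}) = \int \Big[\prod_{k=0}^{t} G_{\bar s_k}(s_k)\Big]\bar p_0(\bar s_0)\prod_{k=1}^{t}\bar T(\bar s_k\mid \bar s_{k-1},\opt_{k-1})\,d\bar s_{0:t},
\]
so the whole problem becomes comparing these two products.

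For the forward direction ($\phi$ dynamics-preserving $\Rightarrow B_t=\bar B_t$) I would induct on $t$. The base case $\bar B_0=B_0=p_0$ follows from the initial-state clause of the grounding, $\int G_{\bar s_0}(s_0)\bar p_0(\bar s_0)\,d\bar s_0=p_0(s_0)$. For the inductive step I would integrate out only the last abstract variable $\bar s_t$ in the unrolled $\bar B_t$: the factor $G_{\bar s_t}(s_t)\,\bar T(\bar s_t\mid\bar s_{t-1},\opt_{t-1})$ integrates, by the consistency identity, to $T(s_t\mid\phi(s_{t-1}),\opt_{t-1})$, and here the transition clause of dynamics preservation upgrades this to $T(s_t\mid s_{t-1},\opt_{t-1})$ (legitimately, since on the support of $G_{\bar s_{t-1}}(s_{t-1})$ one has $\phi(s_{t-1})=\hat z'_{t-1}$). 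What remains is exactly $T(s_t\mid s_{t-1},\opt_{t-1})\,\bar B_{t-1}(s_{0:t-1}\mid\opt_{0:t-2})$, and the induction hypothesis together with the defining recursion of $B_t$ close the step.

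For the converse ($B_t=\bar B_t$ for all $t$ and all executable option sequences $\Rightarrow$ dynamics-preserving) I would read the transition condition off the last-step conditional. Matching $B_1=\bar B_1$ as joint laws on $(s_0,s_1)$ and cancelling the common marginal $p_0(s_0)$ gives $T(s_1\mid s_0,\opt_0)=\tilde T(s_1\mid\bar s_0,\opt_0)=T(s_1\mid\phi(s_0),\opt_0)$ for $p_0$-almost every $s_0$; since the right-hand side depends on $s_0$ only through $\phi(s_0)$, this is exactly $T(s'\mid s,\opt)=T(s'\mid\phi(s),\opt)$. To promote this from $\mathrm{supp}(p_0)$ to every positively-reachable state I would repeat the cancellation at the final step of $B_t=\bar B_t$ for arbitrary $t$ and option sequence: each reachable $s_{t-1}$ carries positive mass under $B_{t-1}$, so the identity is forced there too.

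The step I expect to be the main obstacle is the initiation-set half of dynamics preservation, since $I_\opt$ appears in neither $B_t$ nor $\bar B_t$. I plan to handle it by treating $B_t(\cdot\mid\opt_{0:t-1})$ as a bona fide distribution exactly on sequences that are executable in the ground model, while $\bar B_t$ respects executability only through $\phi$; requiring the two families to coincide for \emph{every} option sequence then forces the executable sets to agree, i.e.\ $\Pr(I_\opt=1\mid s)=\Pr(I_\opt=1\mid\phi(s))$ on reachable states. I would flag this as the delicate point of the ``iff'', since without restricting to executable sequences the initiation clause is not visible to the trajectory laws. The remaining technical care is measure-theoretic: keeping the whole argument at the level of the disintegration identity $T(s'\mid z,\opt)=\int G(s'\mid z,\opt,z')\,T(z'\mid z,\opt)\,dz'$ so that it stays valid when $\mathcal Z$ is continuous and each fibre $\{s':\phi(s')=z'\}$ is null.
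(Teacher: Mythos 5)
Your proposal follows essentially the same route as the paper's proof: both directions unroll $B_t$ and $\bar B_t$ into products, the forward direction follows from the construction (grounding the $\bar T$-pushforward recovers $T(s'\mid\phi(s),\opt)$, which dynamics preservation upgrades to $T(s'\mid s,\opt)$), and the converse proceeds by induction, matching the two laws one step at a time to force $T(s'\mid s,\opt)=\tilde T(s'\mid z,\opt)$ on the support of the grounding. Two of your refinements are genuine improvements. First, in the converse you compare the \emph{joint} laws on $(s_0,\dots,s_t)$ and cancel the common factor $p_0(s_0)$ (resp.\ the previous-step law), which yields the pointwise identity for almost every reachable $s_{t-1}$ directly; the paper instead works with the marginals $\Prob(s_i\mid \opt_0,\dots,\opt_{i-1})$, which only constrain an average over $s_{i-1}$, and then asserts the pointwise conclusion via the informal remark that the model ``would commit a non-zero error''---your cancellation makes that step rigorous. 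Second, and more substantively, you are right that the initiation-set clause $\Prob(I_\opt=1\mid s)=\Prob(I_\opt=1\mid\phi(s))$ is invisible to the trajectory laws as defined: the paper's converse establishes only the transition clause and never mentions initiation sets, so as written it proves a strictly weaker conclusion than ``$\phi$ is dynamics-preserving.'' Your repair---treating $B_t(\cdot\mid\opt_0,\dots,\opt_{t-1})$ as well-defined exactly on executable sequences and quantifying the equality over all such sequences---is a sensible way to recover the full biconditional, but you should make that convention explicit, since without it the equality of the two families carries no information about $I_\opt$ and the ``only if'' direction genuinely fails. Your disintegration phrasing likewise sidesteps the paper's literal use of $\mathds{1}[\phi(s')=z']/T(z'\mid z,\opt)$, which is ill-defined when the fibres of $\phi$ are null sets in a continuous $\mathcal{Z}$.
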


\begin{proof}

Let $\phi^{-1}(z) = \{s\in\St \mid \phi(s)=z \}$.  We construct $\bar{T}$ and $G$  such that it satisfies that,
\begin{align*}
    \bar{T}(z'|z, \opt) &= \int_{s'\in \phi^{-1}(z')} T(s'|z, \opt)ds';\\
    G(s'|z, \opt, z') &= \frac{T(s'|z, \opt) \mathds{1}[\phi(s')=z']}{\bar{T}(z'|z,\opt)}
\end{align*}

If the dynamics preserving property holds, we have that there exists a mapping $\phi$ such that $T(s'|s,\opt) = T(s'|\phi(s),\opt)$.
Hence, by defining that abstract state as $\bar{s} = (z, \opt, z')$, we can build the grounded abstract model such that it follows that $B_t = \bar{B_t}$, by construction.

To prove the converse, we assume that $B_t = \bar{B}_t$. 

Hence, by construction, we have that $\Prob(s_t, ..., s_0|\opt_0, z_0, ..., \opt_{t-1}, z_{t-1}) = \prod_t \Prob(s_t|\opt_0, z_0, ..., \opt_{t-1}, z_{t-1})$. Therefore, we have that

\begin{align*}
    \bar{B_t}(s_t,..., s_0|\opt_0, ..., \opt_{t-1}) &= \int \prod_{i=0}^t \Prob(s_i|\opt_0, z_0, ..., \opt_{i-1},z_{t-1})\Prob(z_i, ..., z_0|\opt_0,..., \opt_{i-1})dz_0...z_t\\
    &= \int \prod_{i=0}^t \Prob(s_i|z_i, \opt_{i-1})\Prob(z_i, ..., z_0|\opt_0,..., \opt_{i-1})dz_0...z_t\\
    &= \int \prod_{i=0}^t G(s_i|z_{i-1}, \opt_{i-1}, z_i)\Prob(z_i, ..., z_0|\opt_0,..., \opt_{i-1})dz_0...z_t\\
    &= \prod_{i=0}^t \int G(s_i|z_{i-1}, \opt_{i-1}, z_i)\Prob(z_i, z_{i-1}|\opt_0,..., \opt_{i-1})dz_iz_{i-1}\\
    &= \prod_{i=0}^t \int G(s_i|z_{i-1}, \opt_{i-1}, z_i)\bar{T}(z_i|z_{i-1}, \opt_{i-1})\Prob(z_{i-1}|\opt_0,..., \opt_{i-2})dz_iz_{i-1}\\
    &= \prod_{i=0}^t \int \tilde{T}(s_i|z_{i-1}, \opt_{i-1})\Prob(z_{i-1}|\opt_0,..., \opt_{i-2})dz_{i-1}
\end{align*}
\begin{align*}
    B_t(s_t, ..., s_0|\opt_0, ..., \opt_{t-1}) &= p_0(s_0)\prod_{i=1}^t  T(s_i|s_{i-1}, \opt_{i-1})\\
    &= \prod_{i=1}^t  T(s_i|s_{i-1}, \opt_{i-1})\Prob(s_{i-1}|\opt_0, ...,\opt_{t-2})
\end{align*}

Hence, we must have that for all $s_{i-1} \in z_{i-1}$ and all $i \in [t]$ and $t \geq 0$

\begin{equation*}
    \int T(s_i|s_{i-1}, \opt_{i-1})\Prob(s_{i-1}|\opt_0, ...,\opt_{t-2})ds_{i-1} = \int \tilde{T}(s_i|z_{i-1}, \opt_{i-1})\Prob(z_{i-1}|\opt_0,..., \opt_{i-2})dz_{i-1}
\end{equation*}

That is, 
\begin{align*}
    \begin{cases}
        \Prob(s_0) = p_0(s_0) = \int G(s|z_0)p_0(z_0)ds & \text{for  } t = 0\\
        \Prob(s_1|\opt_0) = \int T(s_1|s_0, \opt_0)p_0(s_0)ds_0 = \int \tilde{T}(s_0|z_0, \opt_0)p_0(z_0)dz_0 & \text{for  } t=1 \\
    \end{cases}
\end{align*}

By definition, $t=0$ holds. For $t=1$, we have

\begin{align*}
    \Prob(s_1|\opt_0) &= \int T(s_1|s_0, \opt_0)p_0(s_0)ds_0\\
    &= \int T(s_1|s_0, \opt_0)G(s_0|z_0)p_0(z_0)dz_0ds_0 \\
    &= \int \tilde{T}(s_1|z_0, \opt_0)p_0(z_0)dz_0
\end{align*}

which follows from the equation at $t=0$. Hence, it must be true that for any $s_0 \in \phi^{-1}(z_0)$, for any $z_0$ with $p_0(z_0) > 0$. 

\begin{align*}
\tilde{T}(s_1|z_0, \opt_0) = \int T(s_1|s_0, \opt_0)G(s_0|z_0)ds_0
\end{align*}

We can see that for any $s_0 \in \phi^{-1}(z_0)$ such that $T(s_1|s_0, \opt_0) \neq \tilde{T}(s_1|z_0, \opt_0)$, the abstract model would commit a non-zero error in its prediction. Hence, it must be that $T(s_1|s_0, \opt_0) = \tilde{T}(s_1|z_0, \opt_0)$ for $s_0 \in \phi^{-1}(z_0)$.

Let the equations at time $t=i-1$ and $t=i-2$ hold, then
\begin{align*}
    &\Prob(s_i|\opt_0, ..., \opt_{i-1}) = \int T(s_i|s_{i-1}, \opt_{i-1})p_{i-1}(s_{i-1}|\opt_0,...\opt_{i-2})ds_{i-1}\\
    &= \int T(s_i|s_{i-1}, \opt_{i-1})\tilde{T}(s_{i-1}|z_{i-2}, \opt_{i-2})p_{i-2}(z_{i-2}|\opt_0, ...,\opt_{i-3})ds_{i-1}dz_{i-1}dz_{i-2} \\
    &= \int T(s_i|s_{i-1}, \opt_{i-1})G(s_{i-1}|z_{i-2}, \opt_{i-2}, z_{i-1})\bar{T}(z_{i-1}|z_{i-2}, \opt_{i-2})p_{i-2}(z_{i-2}|\opt_0, ...,\opt_{i-3})ds_{i-1}dz_{i-1}dz_{i-2}\\
    &= \int \tilde{T}(s_i|z_{i-1}, \opt_{i-1})p_{i-1}(z_{i-1}|\opt_0, ...,\opt_{i-2})dz_{i-1}
\end{align*}

Because $p_{i-1}(z_{i-1}|\opt_0, ...,\opt_{i-2}) = \int \bar{T}(z_{i-1}|z_{i-2}, \opt_{i-2})p_{i-2}(z_{i-2}|\opt_0, ...,\opt_{i-3})dz_{i-2}$ hold by construction of the abstract MDP, we need the following to hold.
\begin{equation}
    \tilde{T}(s_i|z_{i-1}, \opt_{i-1}) = \int T(s_i|s_{i-1}, \opt_{i-1})G(s_{i-1}|z_{i-2}, \opt_{i-2}, z_{i-1})ds_{i-1}.
\end{equation}

Therefore, as in the base case, we need that $\tilde{T}(s_i|z_{i-1}, \opt_{i-1}) = T(s_i|s_{i-1}, \opt_{i-1})$ for all $s_{i-1} \in \phi^{-1}(z_{i-1})$ that have $G(s_{i-1}|z_{i-2}, \opt_{i-2}, z_{i-1}) > 0$.  Then, $\phi$ must be dynamics preserving.

\end{proof}

\begin{corollary}
    Let the tuple $(M, \bar{M}, G)$ be a grounded abstract model. 
    Let the strong subgoal property \citep{konidaris2018skills} for an option $\opt$ be defined as, $\text{Pr}(s'|s, \opt) = \text{Pr}(s'|\opt)$.
    The dynamics preserving property holds with a finite abstract state space $\mathcal{Z} = [N]$ for some $N \in \mathbb{N}$  if and only if the strong subgoal property holds.
\end{corollary}
\begin{proof}
    If the strong subgoal property holds, we have that $Pr(s'|s, \opt) = Pr(s' | \opt)$. Then, for any function $\phi: S\rightarrow \mathcal{Z}$, it holds that $\Prob(s'|\phi(s), \opt) = \Prob(s'|s, \opt)$.
    
    Therefore, it is only important to be able to know if a given option is executable in a given abstract state. Therefore, we can construct the function $I_\mathcal{O}(s) = [I_0(s), ..., I_{|\mathcal{O}|}(s)]$ that returns a binary vector that indicates which options are executable in $s$.

    Define the equivalence relation $ s_0 \sim_{\mathcal{O}} s_1$ iff $I_\mathcal{O}(s_1)=I_\mathcal{O}(s_2)$. We can define the abstract state space as $Z \triangleq S / \sim_\mathcal{O}$, that is, the set of equivalent classes. Given that there at most $2^{|\mathcal{O}|} \in \mathbb{N}$ classes, then the abstract MDP is finite.

    We assume that the dynamics preserving property holds and that the abstract state space $Z$ is finite to prove the converse. Then, there exists $\phi : S\rightarrow \mathcal{Z}$ such that $\Prob(s'|\phi(s), \opt) = \Prob(s'|s, \opt)$ and $\Prob(I_\opt=1|s) = \Prob(I_\opt =1|\phi(s))$.

    We can construct a factored $\phi(s) = [\phi_D(s), \phi_I(s)]$, such that, $\Prob(s'|\phi(s), \opt) = \Prob(s'|\phi_D(s), \opt)$ and $\Prob(I_\opt=1|\phi(s)) = \Prob(I_\opt =1|\phi_I(s))$. 

    If we define $\phi_I$ based on the function $I_\mathcal{O}$, as before, then $\phi_I$ maps to a set of at most $2^{|\mathcal{O}|}$ elements. As $\mathcal{Z} = \mathcal{Z}_D\times \mathcal{Z}_I$ is finite, then $\mathcal{Z}_D$ is also finite. Thus, we construct $Z_D = [M]$ and for each option $\opt$ and equivalence class $m\in [M]$ options from each option $\opt$ such that $Pr(s'|\opt_m)  \triangleq Pr(s'|m, \opt)$. Then, the strong subgoal property holds for every $\opt_m$.

\end{proof}

\begin{proposition}
Let $\phi$ be a dynamics-preserving abstraction and $\bar{s} = (\hat{z}, \hat{\opt}, z)$. For $\epsilon > 0$, if $\lVert G_z(s) - G_{\bar{s}}(s)\rVert_1^2 \leq \epsilon$, then there exists $\epsilon_T > 0$ and $\epsilon_R$ > 0 such that $\lVert T(s'|s, \opt) - \tilde{T}(s'|z,\opt) \rVert_1^2 \leq \epsilon_T$ and  $\lVert R(s, \opt) - \tilde{R}(z, \opt) \rVert_1^2 \leq \epsilon_R$.
\end{proposition}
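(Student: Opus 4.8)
The plan is to exploit that, since $\phi$ is dynamics-preserving, the \emph{ideal} grounding $G_z$ reconstructs the ground transition exactly; the reconstruction error therefore arises solely from replacing $G_z$ by the learned tuple grounding $G_{\bar s}$, and the hypothesis controls precisely that gap. First I would write the reconstructed transition $\tilde T(s'\mid z,\opt) = \int G_{\bar s'}(s')\,\bar T(\bar s'\mid \bar s,\opt)\,d\bar s'$ as in Theorem~\ref{thm:value-loss}. Because $\bar T(\bar s'\mid\bar s,\opt)$ is supported on tuples $\bar s'=(z,\opt,z')$ compatible with $\bar s=(\hat z,\hat\opt,z)$ and there equals $T(z'\mid z,\opt)$, this collapses to an integral over next abstract states $z'$. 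By Definition~\ref{def:dynamics} the true transition is $T(s'\mid s,\opt)=T(s'\mid z,\opt)=\int G_{z'}(s')\,T(z'\mid z,\opt)\,dz'$ with the ideal grounding.

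Subtracting the two reconstructions, the integrand difference is $\bigl[G_{z'}(s')-G_{\bar s'}(s')\bigr]\,T(z'\mid z,\opt)$. The key step is to take the $L^1$ norm in $s'$ and move it inside the integral over $z'$ using the triangle inequality for an average, $\lVert \int f_{z'}\,d\mu(z')\rVert_1 \le \int \lVert f_{z'}\rVert_1\,d\mu(z')$, which yields $\int \lVert G_{z'}-G_{\bar s'}\rVert_1\,T(z'\mid z,\opt)\,dz'$. Reading the hypothesis $\lVert G_z - G_{\bar s}\rVert_1^2 \le \epsilon$ as holding uniformly over the reachable next states gives $\lVert G_{z'}-G_{\bar s'}\rVert_1 \le \sqrt{\epsilon}$, and since $\int T(z'\mid z,\opt)\,dz' = 1$ I obtain $\lVert T(\cdot\mid s,\opt) - \tilde T(\cdot\mid z,\opt)\rVert_1 \le \sqrt\epsilon$, so $\epsilon_T = \epsilon$ suffices.

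For the reward, $\tilde R(z,\opt) = \E_{s\sim G_{\bar s}}[R(s,\opt)]$ is compared against the grounded reward under the ideal grounding, $\E_{s\sim G_z}[R(s,\opt)]$ (which coincides with $R(s,\opt)$ when the reward depends on the state only through $\phi$). Their difference is $\int \bigl(G_z(s)-G_{\bar s}(s)\bigr) R(s,\opt)\,ds$, which I bound by H\"older's inequality using $\lVert R\rVert_\infty \le R_{Max}$: the bound is $R_{Max}\,\lVert G_z - G_{\bar s}\rVert_1 \le R_{Max}\sqrt\epsilon$, giving $\epsilon_R = R_{Max}^2\,\epsilon$. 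Both constants are strictly positive whenever $\epsilon>0$.

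The main technical care is in the transition step: interchanging the $L^1$ norm with the integral over $z'$ and restricting to reachable $z'$ of positive grounding mass, since the hypothesis is stated pointwise at $\bar s$ and must be read across the next-state support. The genuine conceptual subtlety, however, is the reward: the statement pits the pointwise $R(s,\opt)$ against an averaged reconstruction, and this is controllable \emph{only} because $R$ is bounded by $R_{Max}$ (or because $R$ is a function of the abstract state) --- dynamics-preservation itself imposes no constraint on the reward, so boundedness of $R$ is the load-bearing assumption for $\epsilon_R$.
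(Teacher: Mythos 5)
Your argument is essentially the paper's own proof: the transition bound uses the same decomposition --- writing both the true kernel $T(s'\mid s,\opt)=T(s'\mid z,\opt)$ and its reconstruction as groundings integrated against $\bar{T}(z'\mid z,\opt)$, pulling the $L^1$ norm inside the $z'$-integral, and using that $\bar{T}$ integrates to one together with the uniform grounding gap $\sqrt{\epsilon}$ --- and the reward bound is the same H\"older/$R_{Max}$ estimate on the grounding difference. The only divergence is bookkeeping: you obtain $\epsilon_R = R_{Max}^2\,\epsilon$ directly, whereas the paper adds a Minkowski step giving $\sqrt{\epsilon_R}=(1+R_{Max})\sqrt{\epsilon}$; both arguments lean on exactly the boundedness of $R$ (since dynamics-preservation constrains only the transitions) that you correctly identify as the load-bearing assumption.
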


\begin{proof}
    First, we prove that the bounded grounding error implies bounded transition distribution error. If $\phi$ is a dynamics abstraction, then we can learn $\tilde{T}(z'|z, \opt)$ and we have that $T(s'|s, \opt) = T(s'|z, \opt) = \int G_{\bar{s}}(s)\bar{T}(z'|z,\opt)dz'$ and its corresponding approximation $\tilde{T}(s'|z, o) = \int G_{z'}(s)\bar{T}(z'|z,\opt)dz'$

\begin{align*}
    \lVert T(s'|s, \opt) - \tilde{T}(s'|z, \opt) \rVert_1 &= \left| \int \left(G_{\bar{s}}'(s)\bar{T}(z'|z,\opt) - G_{z'}(s)\bar{T}(z'|z,\opt)\right)dz' \right|\\
    &\leq \int \bar{T}(z'|z,\opt)|G_{\bar{s}'}(s)-G_{z'}(s)|dz'ds\\
    &\leq \sqrt{\epsilon}
\end{align*}

Analogously, we can bound the error of the reward function.

\begin{align*}
    \lVert \bar{R}(z',o) - \tilde{R}(z',o) \rVert_1 &= \left| \int G_{\bar{s}'}(s)R(s,\opt)ds -\int G_{z'}(s)R(s,\opt)ds \right|\\
    &\leq \int \left|G_{\bar{s}'}(s)-G_{z'}(s) \right|\left|R(s, \opt)\right| ds\\
    &\leq RMax\int \left|G_{\bar{s}'}(s)-G_{z'}(s) \right|ds\\
    &\leq RMax \sqrt{\epsilon}
\end{align*}

Then, it follows from Minkowski's inequality that 
\begin{align*}
    \lVert R(s,\opt) - \bar{R}(z',\opt) \rVert_1 &= \lVert R(s,\opt) - \tilde{R}(z', \opt) + \tilde{R}(z', \opt) - \bar{R}(z',o) \rVert_1\\
    &\leq \lVert R(s,\opt) - \tilde{R}(z', \opt) \rVert_1 + \lVert \tilde{R}(z', \opt) - \bar{R}(z',o) \rVert_1\\
    &\leq \sqrt{\epsilon} + RMax\sqrt{\epsilon} = \sqrt{\epsilon_R}
\end{align*}
\end{proof}
\begin{theorem}[Value Loss Bound]
    Let $(M, \bar{M}, G)$ be a grounded abstract model and $\tilde{T}(s'|\bar{s}, \opt) = \int G_{\bar{s}'}(s') \bar{T}(\bar{s}'|\bar{s}, \opt)d\bar{s}'$ be the approximate transition dynamics from the grounded model. If the following conditions hold for all $\opt \in \mathcal{O}$ and all $s\in \St$ with $G_{\bar{s}}(s) > 0$: (1)
        $\lVert T(s'|s, \opt) - \tilde{T}(s'|\bar{s},\opt)\rVert_1^2 \leq \epsilon_T$, and 
        (2)$\lvert R(s, \opt) - \bar{R}(\bar{s}, \opt)\rvert^2 \leq \epsilon_R$;
    then, for any policy $\pi$,
    \begin{equation*}
        \lvert Q^\pi(s, \opt) - Q^\pi(\bar{s}, \opt)\rvert \leq \frac{\sqrt{\epsilon_R} + \gamma VMax\sqrt{\epsilon_T}}{1-\gamma}.
    \end{equation*}
\end{theorem}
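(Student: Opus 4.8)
The plan is to run a standard simulation-lemma fixed-point argument, comparing the two $Q$-functions through their Bellman recursions and using the grounded transition $\tilde{T}$ as a bridge between the ground and abstract dynamics. First I would write the Bellman equation for the ground $Q$-function, $Q^\pi(s,\opt) = R(s,\opt) + \gamma\int T(s'|s,\opt)V^\pi(s')\,ds'$, and for the abstract one, $Q^\pi(\bar{s},\opt) = \bar{R}(\bar{s},\opt) + \gamma\int \bar{T}(\bar{s}'|\bar{s},\opt)V^\pi(\bar{s}')\,d\bar{s}'$, where in each case $V^\pi = \E_{\opt'\sim\pi}[Q^\pi(\cdot,\opt')]$. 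Subtracting the two gives a reward difference plus $\gamma$ times a transition difference; the reward difference is immediately bounded by $\sqrt{\epsilon_R}$ using condition (2).

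The heart of the argument is the transition term. I would introduce $\tilde{T}(s'|\bar{s},\opt)$ and split the transition difference into two pieces by adding and subtracting $\int \tilde{T}(s'|\bar{s},\opt)V^\pi(s')\,ds'$. The first piece, $\int [T(s'|s,\opt) - \tilde{T}(s'|\bar{s},\opt)]V^\pi(s')\,ds'$, is controlled by H\"older's inequality: it is at most $\lVert V^\pi\rVert_\infty\,\lVert T(\cdot|s,\opt) - \tilde{T}(\cdot|\bar{s},\opt)\rVert_1 \leq V_{Max}\sqrt{\epsilon_T}$, using condition (1) and the fact that values are bounded by $V_{Max}$.

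For the second piece I would unfold the definition $\tilde{T}(s'|\bar{s},\opt) = \int G_{\bar{s}'}(s')\bar{T}(\bar{s}'|\bar{s},\opt)\,d\bar{s}'$ and apply Fubini to swap the order of integration, rewriting $\int\tilde{T}(s'|\bar{s},\opt)V^\pi(s')\,ds'$ as $\int \bar{T}(\bar{s}'|\bar{s},\opt)\,\E_{s'\sim G_{\bar{s}'}}[V^\pi(s')]\,d\bar{s}'$. The second piece then becomes $\int \bar{T}(\bar{s}'|\bar{s},\opt)\big(\E_{s'\sim G_{\bar{s}'}}[V^\pi(s')] - V^\pi(\bar{s}')\big)\,d\bar{s}'$, and since each inner difference is a $\pi$-average of $Q^\pi(s',\opt') - Q^\pi(\bar{s}',\opt')$ over $s'$ drawn from the grounding of $\bar{s}'$, it is bounded by the global gap $\Delta \triangleq \sup|Q^\pi(s,\opt) - Q^\pi(\bar{s},\opt)|$, where the supremum runs over all $\opt$ and all compatible pairs $(s,\bar{s})$ with $G_{\bar{s}}(s) > 0$.

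Collecting the three bounds yields the self-referential inequality $\Delta \leq \sqrt{\epsilon_R} + \gamma(V_{Max}\sqrt{\epsilon_T} + \Delta)$, and solving for $\Delta$ gives the claimed bound after dividing by $1-\gamma$. The main subtlety to handle carefully is the combination of the Fubini interchange with the recursive self-bounding step: I must first argue $\Delta$ is finite (it is, since all values lie in $[-V_{Max}, V_{Max}]$, so $\Delta \leq 2V_{Max}$) before rearranging the inequality, and I must take the supremum over exactly those pairs $(s,\bar{s})$ that are compatible under the grounding, so that the inner value gaps can legitimately be bounded by $\Delta$ and the recursion closes.
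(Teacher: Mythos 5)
Your proof is correct and follows essentially the same route as the paper's: both decompose $Q^\pi(s,\opt)-Q^\pi(\bar{s},\opt)$ via the Bellman recursion into a reward error (bounded by $\sqrt{\epsilon_R}$), a transition error bridged through $\tilde{T}$ and bounded by $V_{Max}\sqrt{\epsilon_T}$ via H\"older plus condition (1), and a discounted recursive value-gap term over grounded pairs. The only difference is cosmetic: the paper closes the recursion by induction on $n$-step Bellman iterates $Q_n^\pi$ and sums the geometric series, whereas you solve the self-referential inequality $\Delta \leq \sqrt{\epsilon_R}+\gamma(V_{Max}\sqrt{\epsilon_T}+\Delta)$ directly, correctly noting that finiteness of $\Delta$ must be established first.
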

\begin{proof}
We proceed by induction on $Q_n^\pi(\bar{s},o)$, where
\begin{align}
    %
    v_0^{\pi}(\bar{s}) &= \E_{s \sim \bar{s}}\left[v^\pi(s)\right], \\
    %
    Q_1^\pi(\bar{s},o) &= \int_{s \in \St} P(s)\left(R(s,o)  +\gamma^\tau v_0^\pi(\bar{s}')\right)ds, \\
    &= \int_{s \in \St} P(s) \left(R(s,o) + \gamma^\tau \int_{s' \in \St} T^{s,o,s'}v^\pi(s') ds'\right)ds, \\
    %
    %
    %
    Q_i^\pi(\bar{s},o) &= \int_{s \in \St} P(s) \left(R(s,o) + \gamma^\tau v_{i-1}^\pi(\bar{s}')\right) ds,
\end{align}
with $\bar{s}' = T(\cdot \mid s,o)$. I use $P(s)$ as shorthand for $P(s \sim \bar{s})$ and $T^{s,o,s'}$ for $T(s' \mid s,o)$, and let
\begin{equation}
    \eps_{Q,n} = \sum_{i=0}^{n} \sqrt{\eps_R} + \gamma^i \left(\textsc{VMax}\sqrt{\eps_T}\right).
\end{equation}

\begin{proof}[Base Case: $Q^\pi \approx Q_{1}^{\pi}$]
    \begin{align}
        &Q^\pi(s,o) - Q_{1}^{\pi}(\bar{s},o) \\
        %
        &= R(s,o) + \gamma^\tau \int_{s'} T^{s,o,s'} v^\pi(s') ds' - \int_s P(s) \left(R(\bar{s},o) - \gamma^\tau v_0^\pi(\bar{s}')ds\right), \\
        %
        &= \underbrace{R(s,o) - R(\bar{s},o)}_{\leq \sqrt{\eps_R}}  + \gamma^\tau \int_{s'} T^{s,o,s'} v^\pi(s') ds' - \int_s P(s) \gamma^\tau v_0^\pi(\bar{s}')ds, \\
        %
        &\leq \sqrt{\eps_R}  + \gamma^\tau \int_{s'} T^{s,o,s'} v^\pi(s') ds' - \gamma^\tau \int_s P(s) \mathbb{E}_{s' \sim \bar{s}'}[v^{\pi}(s')] ds \\
        %
        &\leq \sqrt{\eps_R}  + \gamma^\tau \int_{s'} T^{s,o,s'} v^\pi(s') ds' - \gamma^\tau \int_s P(s) \int_{s'}P(s' \sim \bar{s}')v^\pi(s') ds'\ ds, \\
        %
        &\leq \sqrt{\eps_R}  + \gamma^\tau \int_{s'} T^{s,o,s'} v^\pi(s') ds' - \gamma^\tau \int_s P(s) \int_{s'}T^{s,o,s'}v^\pi(s') ds'\ ds, \\
        &\leq \sqrt{\eps_R}  + \gamma^\tau \textsc{VMax} \underbrace{\int_{s'} T^{s,o,s'} - \int_s P(s) T^{s,o,s'} ds\ ds'}_{\leq \sqrt{\eps_T}}, \\
        %
        &\leq \sqrt{\eps_R}  + \gamma^\tau \textsc{VMax}\sqrt{\eps_T}.
    \end{align}
This concludes the base case.
\end{proof}

\begin{proof}[Inductive Case: $Q^\pi \approx Q_{n}^{\pi} \implies Q^\pi \approx Q_{n+1}^{\pi}$] 

\noindent We assume that, for every $s \in \St$ and any $o$,
\begin{equation}
    Q^\pi(s,o) - Q_{n}^{\pi}(\bar{s},o) \leq \eps_{Q,n},
\end{equation}
and prove that
\begin{equation}
    Q^\pi(s,o) - Q_{n+1}^{*}(\bar{s},o) \leq \eps_{Q,n+1}.
\end{equation}

\noindent By algebra,
\begin{align}
    &Q^\pi(s,o) - Q_{n+1}^{\pi}(\bar{s},o)\\
    %
    &= R(s,o) + \gamma^\tau \int_{s'} T^{s,o,s'} v^\pi(s') ds' - \int_{s} P(s) \left(R(s,o) + \gamma^\tau v_n^\pi(\bar{s}')\right) ds,\\
    %
    &= \underbrace{R(s,o) - R(\bar{s},o)}_{\leq \sqrt{\eps_R}}  + \gamma^\tau \int_{s'} T^{s,o,s'} v^\pi(s') ds' - \gamma^\tau \int_s P(s) v_n^\pi(\bar{s}') ds,\\
    %
    &\leq \sqrt{\eps_R}  + \gamma^\tau \int_{s'} T^{s,o,s'} v^\pi(s') ds' - \gamma^\tau \int_s P(s) v_n^\pi(\bar{s}') ds,\\
    %
    &= \sqrt{\eps_R}  + \gamma^\tau \int_{s'} T^{s,o,s'} v^\pi(s') ds' - \gamma^\tau \int_s P(s) \underbrace{v_n^\pi(\bar{s}')}_{\geq \mathbb{E}_{s' \sim \bar{s}'}[v^{\pi}(s')] - \eps_{Q,n}}ds,
\end{align}
\begin{align}
    %
    &\leq \sqrt{\eps_R}  + \gamma^\tau \int_{s'} T^{s,o,s'} v^\pi(s') ds' - \gamma^\tau \int_s P(s) \left(\mathbb{E}_{s' \sim \bar{s}'}[v^{\pi}(s')] - \eps_{Q,n}\right) ds, \\
    %
    &= \sqrt{\eps_R}  + \gamma^\tau \int_{s'} T^{s,o,s'} v^\pi(s') ds' - \gamma^\tau \int_s P(s) \int_{s'} T^{s,o,s'} v^\pi(s') ds'\ ds + \gamma^\tau \eps_{Q,n}, \\
    %
    &= \sqrt{\eps_R}  + \gamma^\tau \int_{s'} T^{s,o,s'} v^\pi(s') ds' - \gamma^\tau \int_{s'}\underbrace{\int_s P(s)  T^{s,o,s'}}_{=T^{\bar{s},o,s'}} v^\pi(s') ds\ ds' + \gamma^\tau \eps_{Q,n}, \\
    %
     &\leq \sqrt{\eps_R}  + \gamma^\tau \textsc{VMax} \underbrace{\int_{s'} T^{s,o,s'}  - T^{\bar{s},o,s'}  ds'}_{\leq \sqrt{\eps_T}} + \gamma^\tau \eps_{Q,n}, \\
    &\leq \sqrt{\eps_R}  + \gamma^\tau \textsc{VMax} \sqrt{\eps_T} + \gamma^\tau \eps_{Q,n}, \\
    %
    &\leq \sqrt{\eps_R}  + \gamma \textsc{VMax} \sqrt{\eps_T} + \gamma\eps_{Q,n}, \\
    %
    &= \eps_{Q,n+1}.
\end{align}
This concludes the inductive case.
\end{proof}
Thus, by induction and the convergence of the geometric series, for any $s, o, \pi$, we conclude that
\begin{equation}
    Q^{\pi}(s,o) - Q^{\pi}(\bar{s},o) \leq \frac{\sqrt{\eps_R} + \gamma \textsc{VMax} \sqrt{\eps_T}}{1-\gamma}.
\end{equation}
\end{proof}

\subsection{TPC is Dynamics Preserving}\label{appendix:tpc-derivation}

We start by considering that by learning an abstract state space such that $MI(S'; Z, O)$ is maximized. The following decomposition based on the mutual information chain rule corresponds to the TPC algorithm \citep{nguyen2021temporal}. In the original paper, they work at the primitive action level and all actions available always, hence, there's no need to consider initiation sets.

\begin{align*}
    MI(S', Z'; Z, O) &\overset{(a)}{=} MI(S'; Z, O) + \underbrace{MI(Z';Z, O|S')}_{=0}; \\
    &\overset{(b)}{=} MI(Z';Z, O) + \underbrace{MI(S'; Z, O | Z')}_{(1)}; \\
    &\overset{(c)}{=} MI(Z';Z, O) + MI(S'; Z,A) -MI(S';Z') + MI(S';Z'|Z,O);
\end{align*}

where (a) follows from the fact that give $s'$ we can determine $z'$, (b) follows from decomposing the term on the left-hand size and (c) from decomposing term (1).

The above implies that $MI(Z';Z,O) = MI(S';Z') - MI(S';Z'|Z,O)$. Therefore, if we maximize both sides of this identity, we must have a latent space that preserve \textit{only} the information of the state $s'$ that is predictable from the previous $(z,a)$ pair. $MI(Z'; Z, O)$ ensures that the next abstract state is predictable from the $(z,o)$ tuple. $MI(S;Z)$ ensures that the abstract state has information about the ground state which is measured by $g(s|z)$.

\begin{equation}
    MI(S;O) = \int p(s,z) \log\frac{g(s|z)}{p(s)}dsdz
\end{equation}

The following decomposition shows the two extra terms required by the TPC algorithm to estabilize the optimization. Term $(a)$ is the (differential) entropy of $\phi$ which tends to infinity for a deterministic function. This is solved by smoothing it with Gaussian noise of $0$ mean and fixed standard deviation, as done in TPC. The second term $(b)$ corresponds to the consistency term, that is, the transition function $p(z'|z,a)$ must have low entropy, which ensures that the abstract dynamics are learnt.

\begin{align*}
    M(S';Z'|Z, O) &= \int p(s',z',z, o) \log\frac{p(s', z'|z, o)}{p(s'|z,o)p(z'|z,o)}ds'dz'dzdo\\
    &= \int p(s',z',z,o) \log\frac{p(z'|s')}{p(z'|z,o)}\\
    &= \underbrace{\int p(s',z')\log p(z'|s')ds'dz'}_{(a)} - \underbrace{\int p(z', z, o) \log p(z'|z,o)dz'dzdo}_{(b)}
\end{align*}

By maximizing $MI(Z';Z, O)$ and $MI(S';Z')$ using InfoNCE \citep{oord2018cpc}, we obtain the TPC algorithm.

\section{Experiments}
For all our planning experiments we use DDQN \citep{van2016deep} modified to consider initiation sets for action selection and target computation to make it compatible with options. We use Adam \citep{kingma2014adam} as optimizer. As exploration, we use linearly decaying $\epsilon$-greedy exploration.

\subsection{Experiments} \label{appendix:experiment-details}

\subsubsection{Environments} 

\begin{figure}
    \centering
    \includegraphics[width=0.7\textwidth]{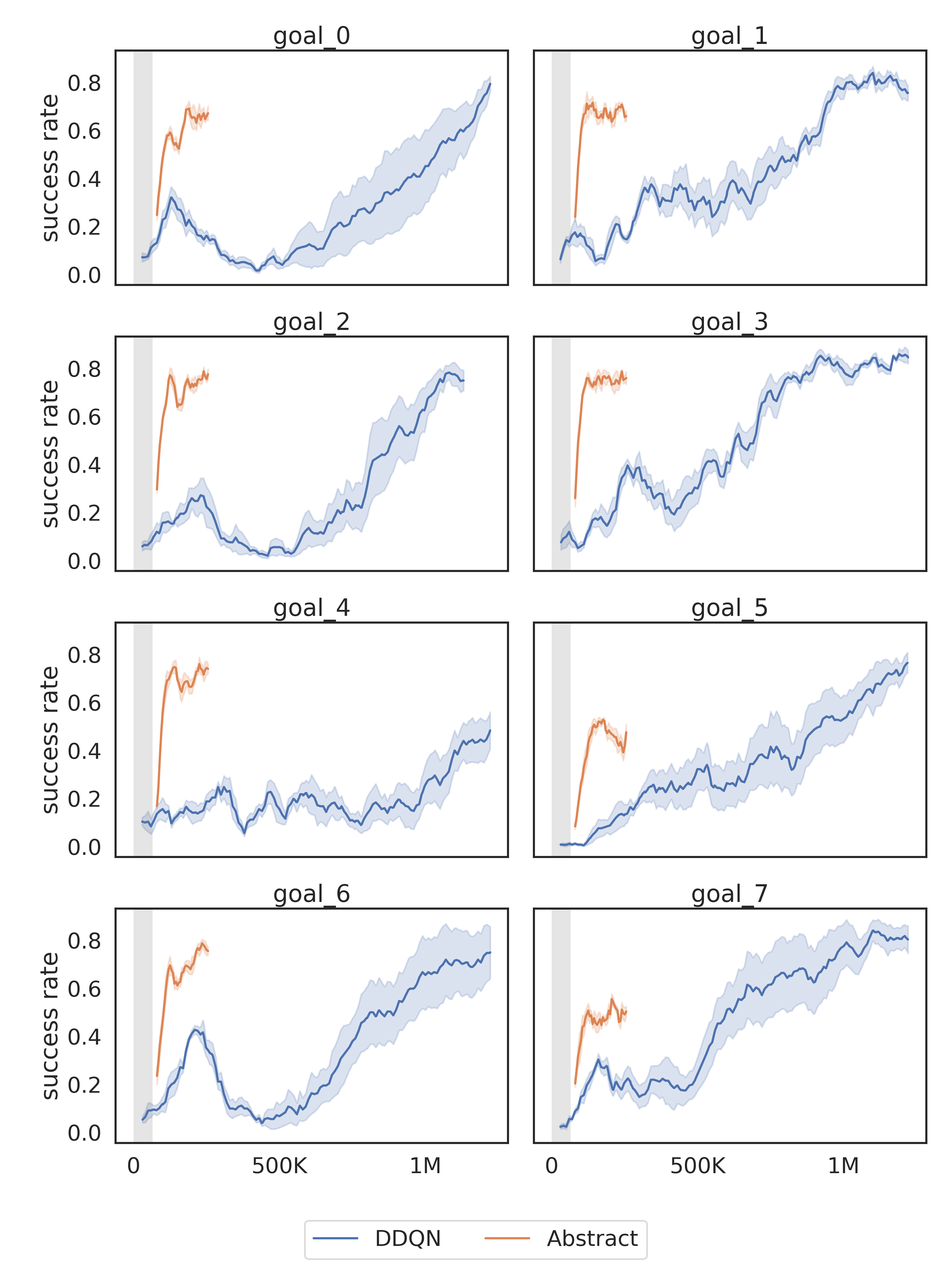}
    \caption{\textbf{Pinball from pixels}. Ground baseline vs Abstract planning. Each goal learning curve is averaged over $5$ seeds and $1$ standard deviation shown in the shaded area of each curve. The gray area corresponds to the offset that corresponds to samples used to pre-train the model. Although is shown in every plot, it is common to all goals.}
    \label{}
\end{figure}



\begin{figure}
    \centering
    \includegraphics[width=.6\textwidth]{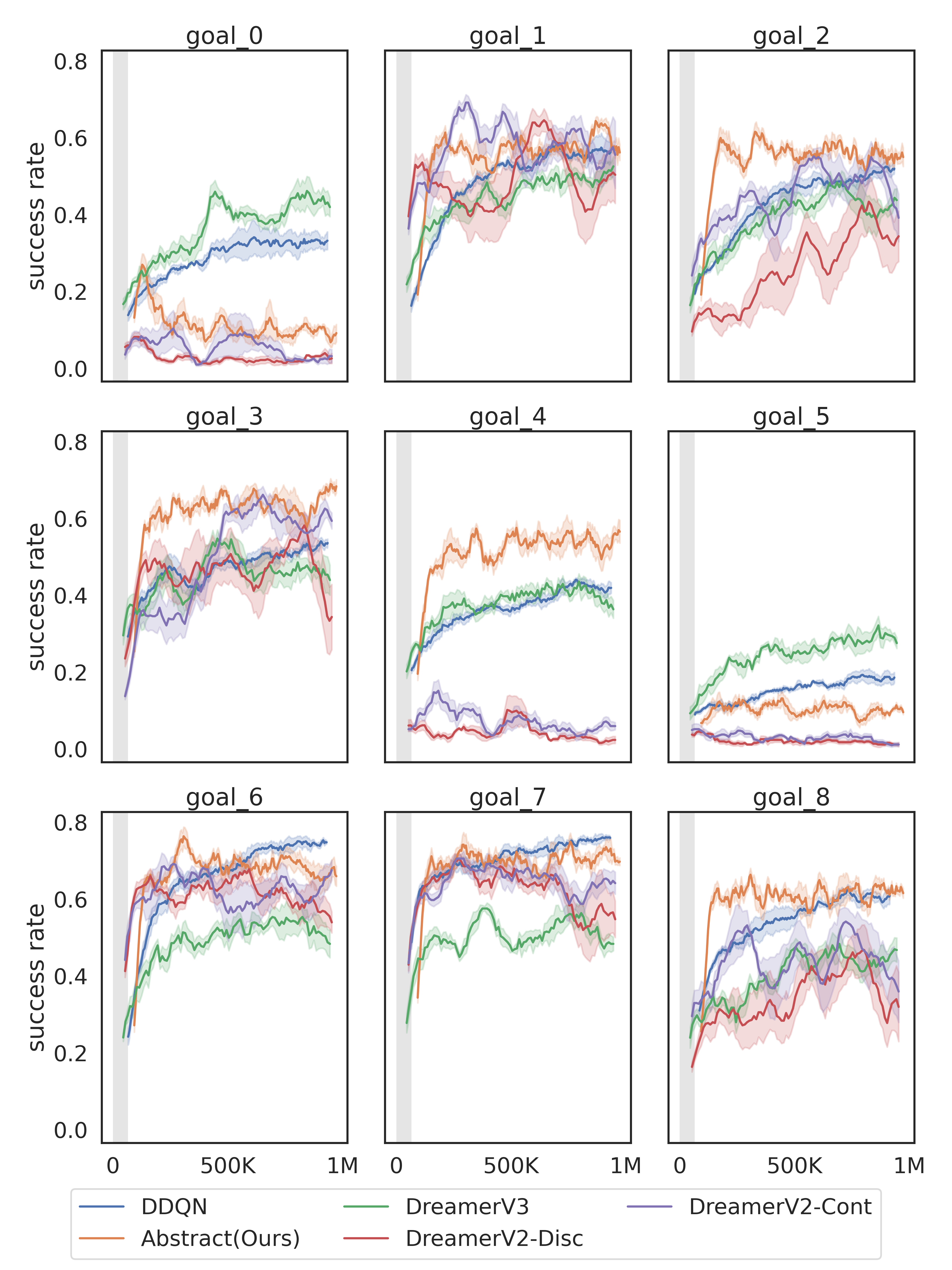}
    \caption{\textbf{Medium Play Antmaze}. Ground baseline vs Abstract planning. Each goal learning curve is averaged over $5$ seeds and $1$ standard deviation shown in the shaded area of each curve. The gray area corresponds to the offset that corresponds to samples used to pre-train the model. Although is shown in every plot, it is common to all goals.}
    \label{}
\end{figure}

\begin{figure}
    \centering
    \includegraphics[width=0.3\textwidth]{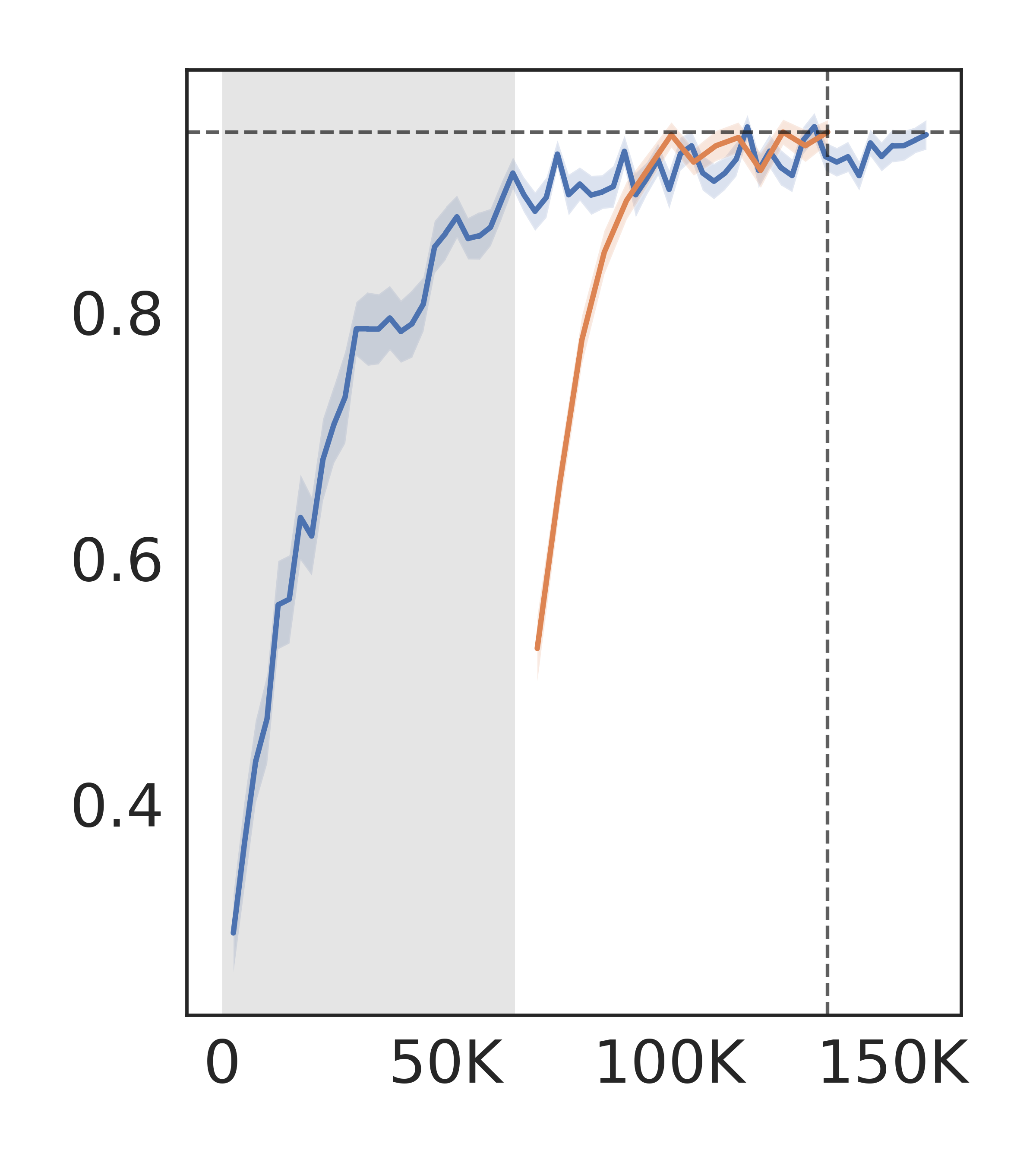}
    \includegraphics[width=0.3\textwidth]{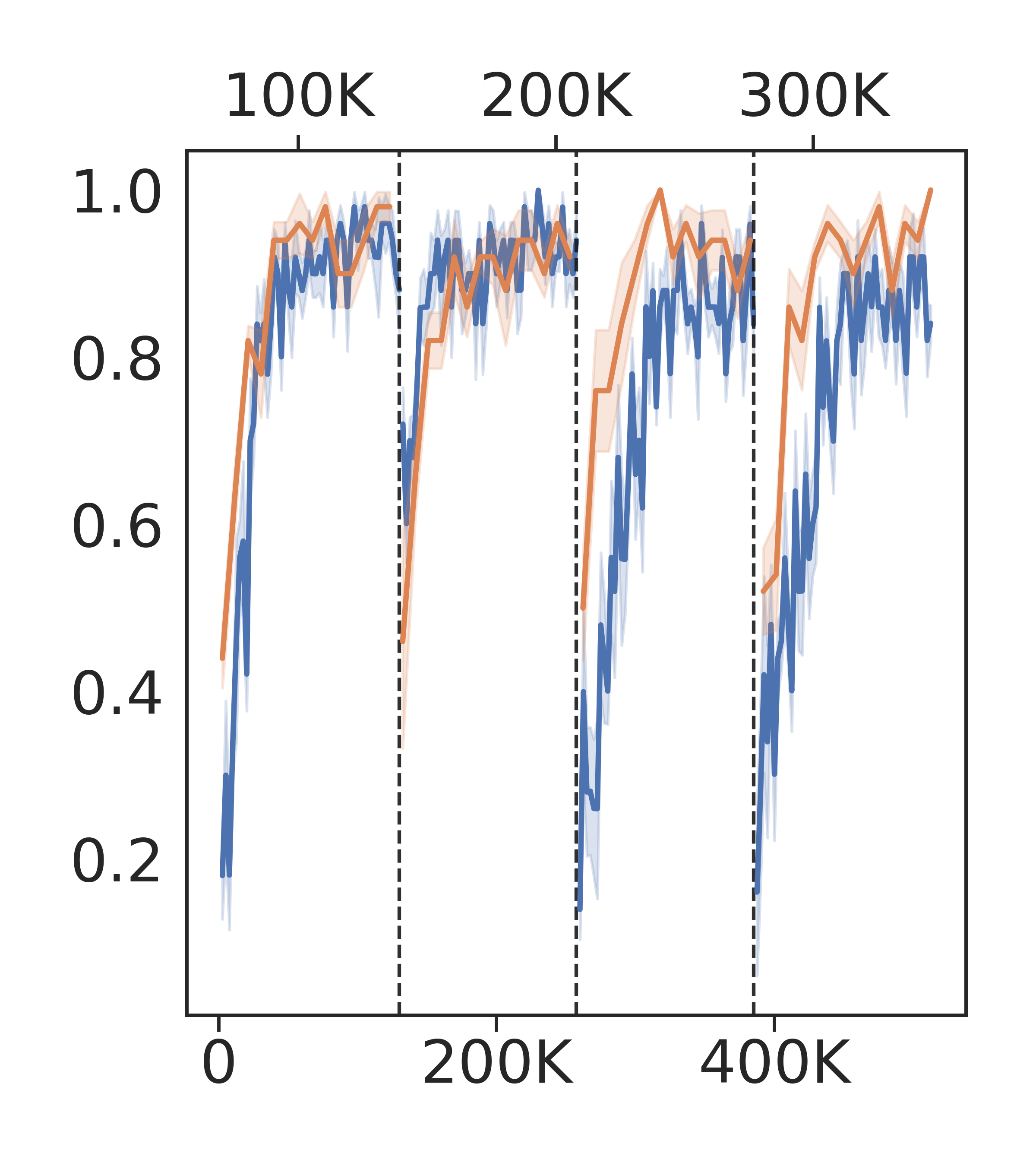}\\
    \includegraphics[width=0.15\textwidth]{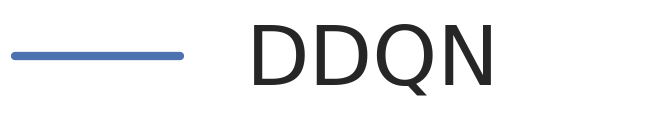}
    \includegraphics[width=0.2\textwidth]{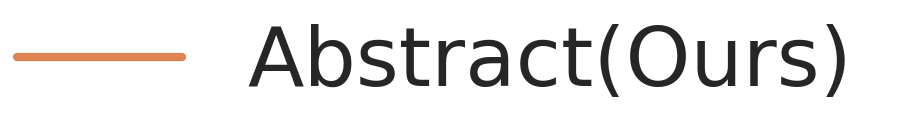}
    \caption{\textbf{U-Maze Antmaze}.}
    \label{}
\end{figure}

\begin{figure}
    \centering
    \includegraphics[width=0.33\textwidth]{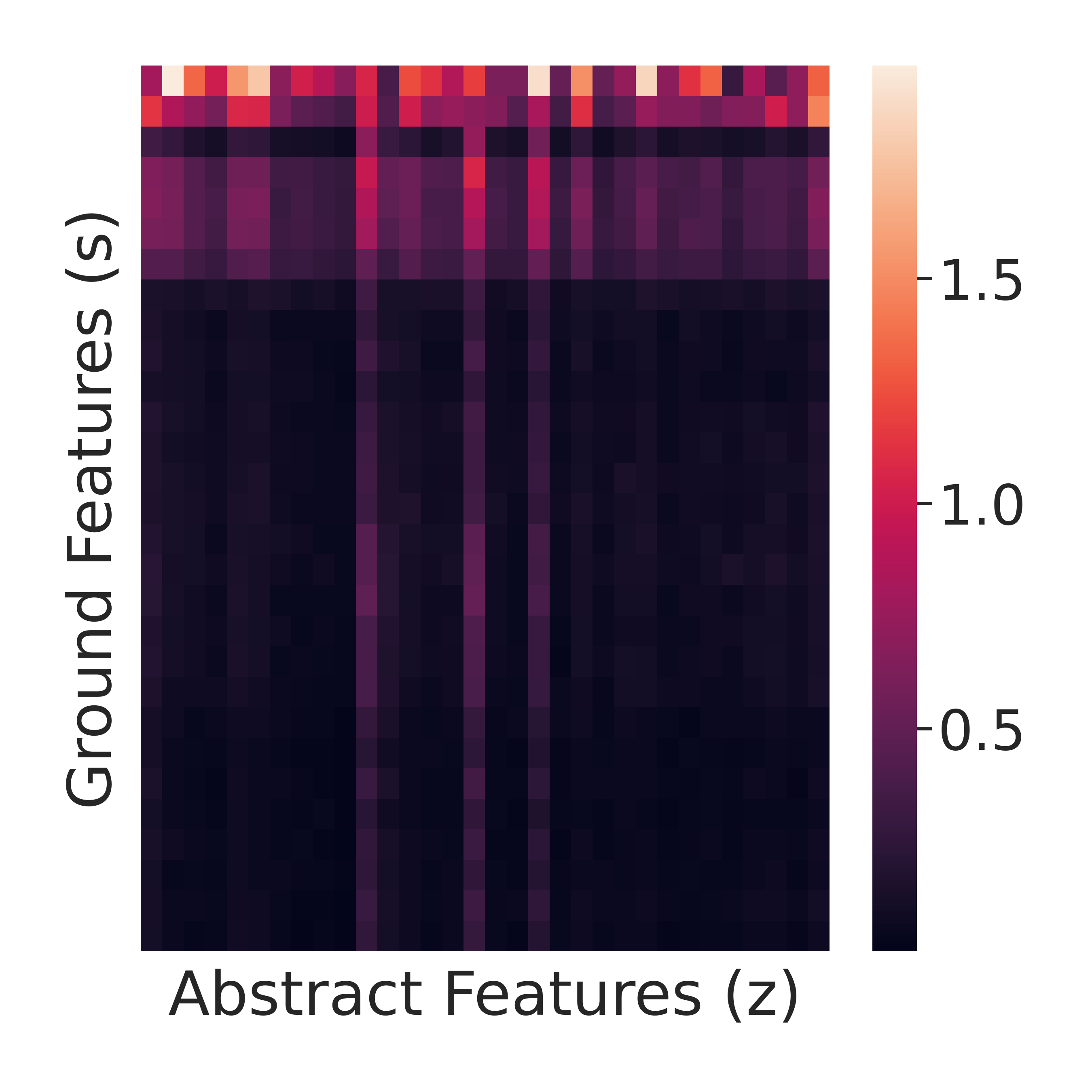}
    \includegraphics[width=0.3\textwidth]{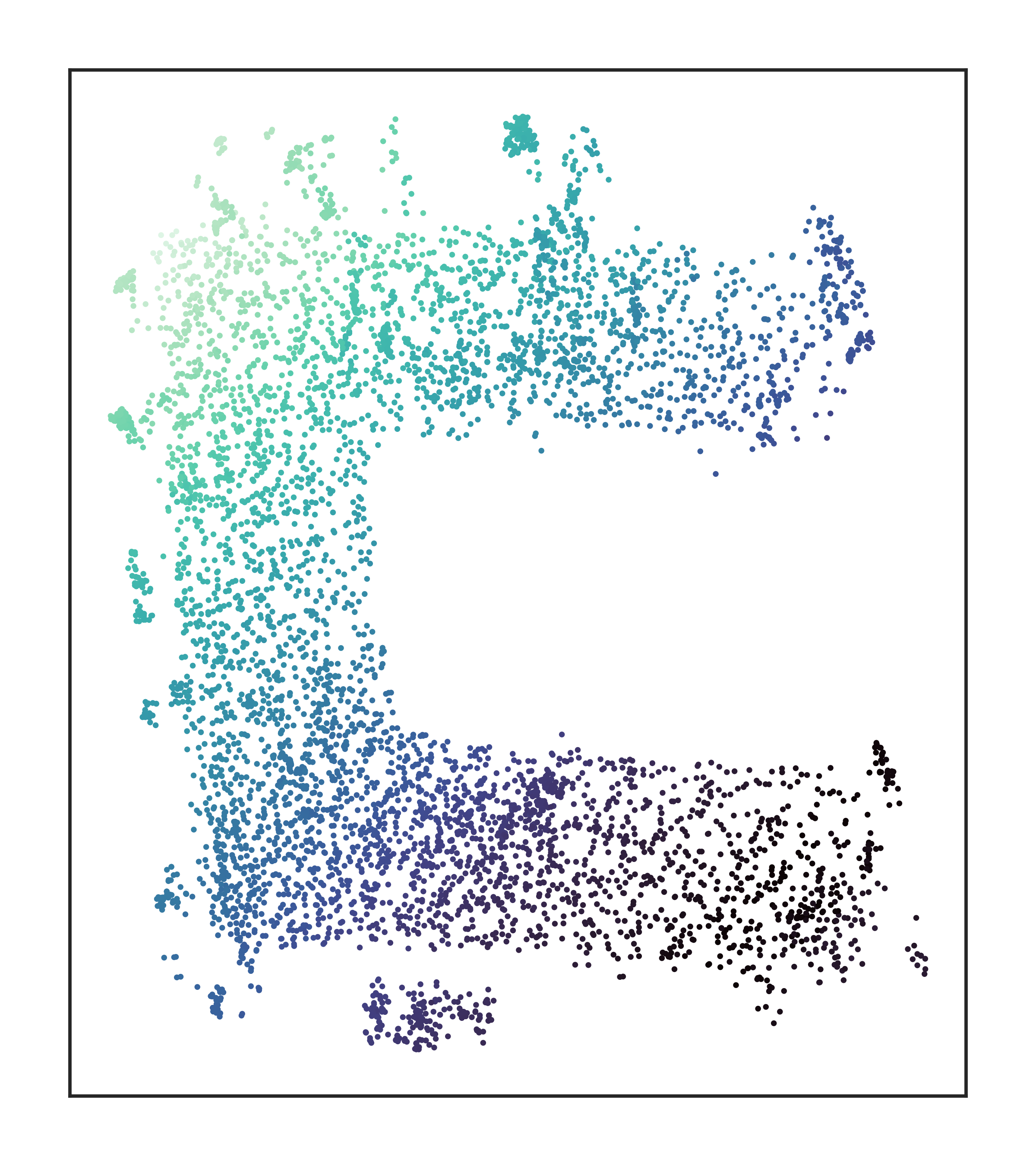}
    \caption{\textbf{U-Maze Antmaze}.}
    \label{}
\end{figure}

\paragraph{Pinball Domain \citep{KonidarisSkillChaining09}} We use a continuous action variant of the original environment. The state space $s = (x,y,\dot{x},\dot{y})$ with $(x,y) \in [0,1]^2$ and $(\dot{x}, \dot{y}) \in [-1,1]$. The action space is the ball acceleration expressed in the form of $\Delta(\dot{x}, \dot{y}\in [-1, 1]^2$. The layout of the obstacles is as in the original environment, show in Figure \ref{}. The reward function takes $-5$ per unit of acceleration. The discount factor is $\gamma=0.9997$.

\paragraph{Pinball Options} Pinball options were designed to the agent in the coordinate dimensions by step size $0.04$. The initiation set are all the position in which the ball would not hit an obstacle by moving in the desired direction. The termination probability is determined by a Gaussian centered in the goal position with standard deviation as $0.01$. For the policy, we handcrafted PI controllers for the position with constants $K_p=50$ and $K_i=8$.

\paragraph{Antmazes} We consider the U-Maze and Medium-Play mazes implemented by D4RL \citep{fu2020d4rl} with the Mujoco ant. In Figure \ref{fig:antmazes} we show diagrams of the considered mazes. The state space is $\St\in\mathbb{R}^{29}$, where the first two dimensions corresponds to the position of the ant in the maze and the rest is proprioception for the ant controls. The action space is $\A\subset[-1,1]^8$ to control the ant joints.

\begin{figure}
  \centering
  
  \begin{subfigure}[b]{0.3\textwidth}
    \centering
    \includegraphics[width=\textwidth]
    {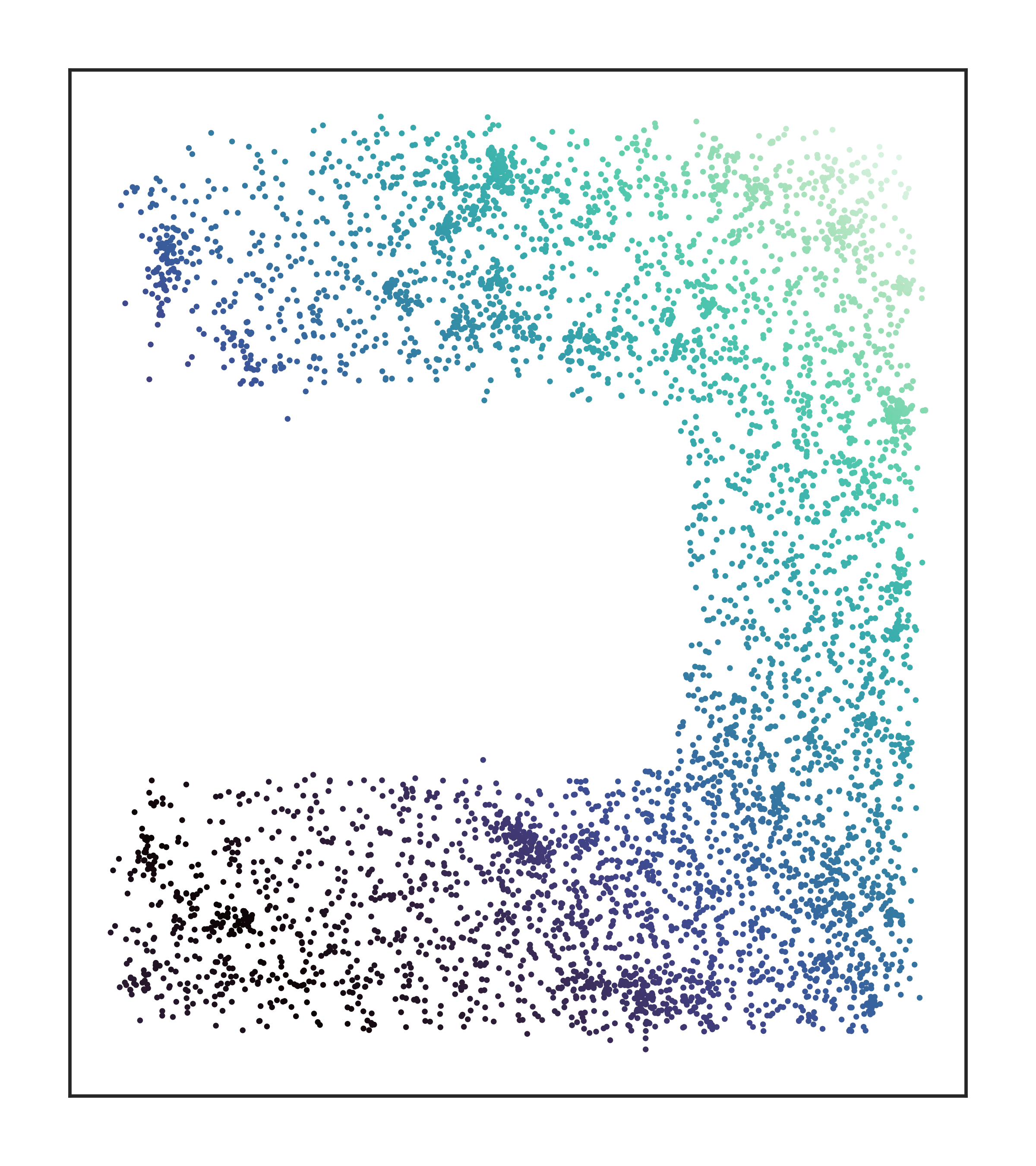}
    \caption{U-Maze}
  \end{subfigure}
  \begin{subfigure}[b]{0.3\textwidth}
    \centering
    \includegraphics[width=\textwidth]
    {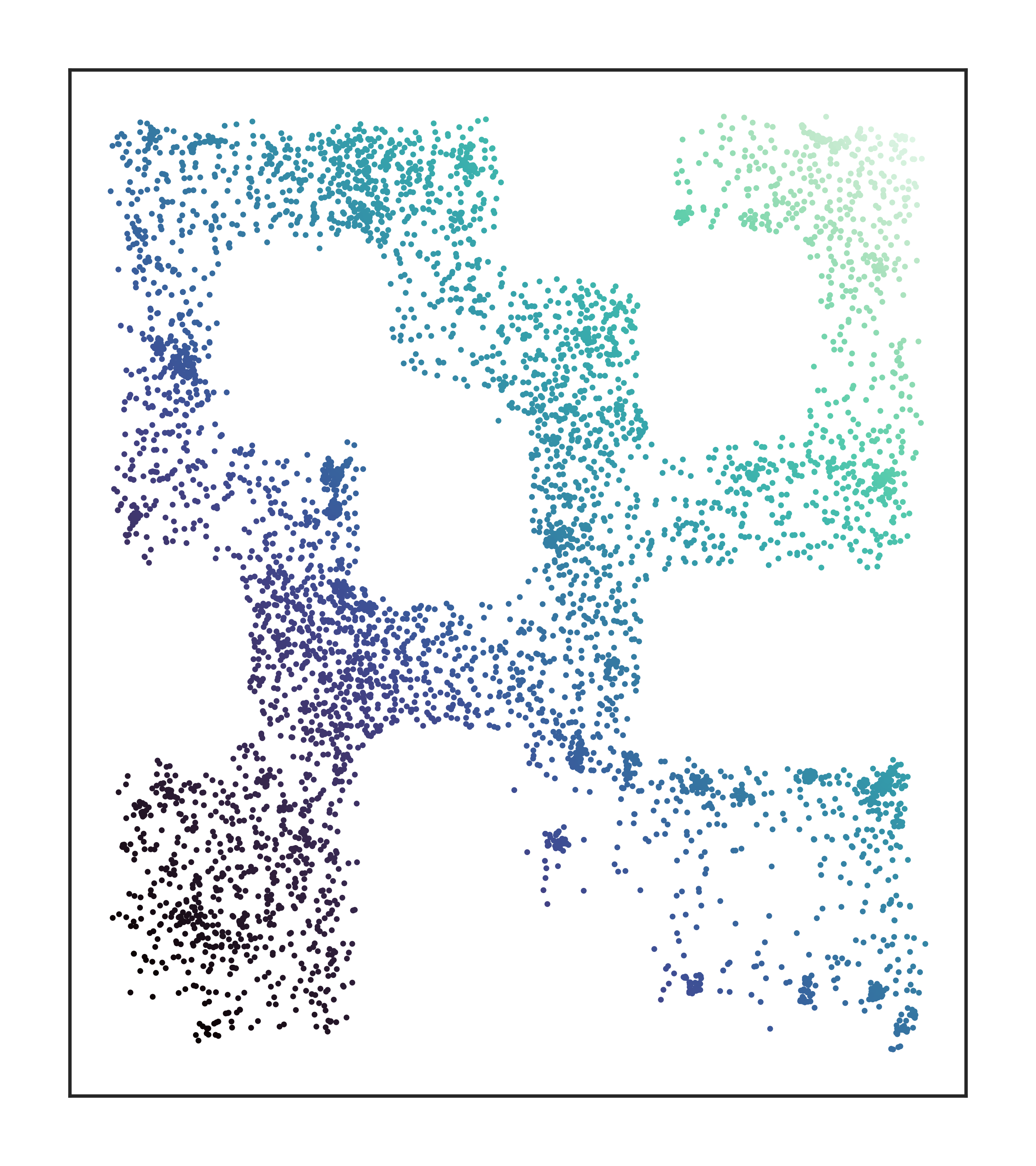}
    \caption{Medium Play Maze}
  \end{subfigure}
  \begin{subfigure}[b]{0.3\textwidth}
    \centering
    \includegraphics[width=\textwidth]
    {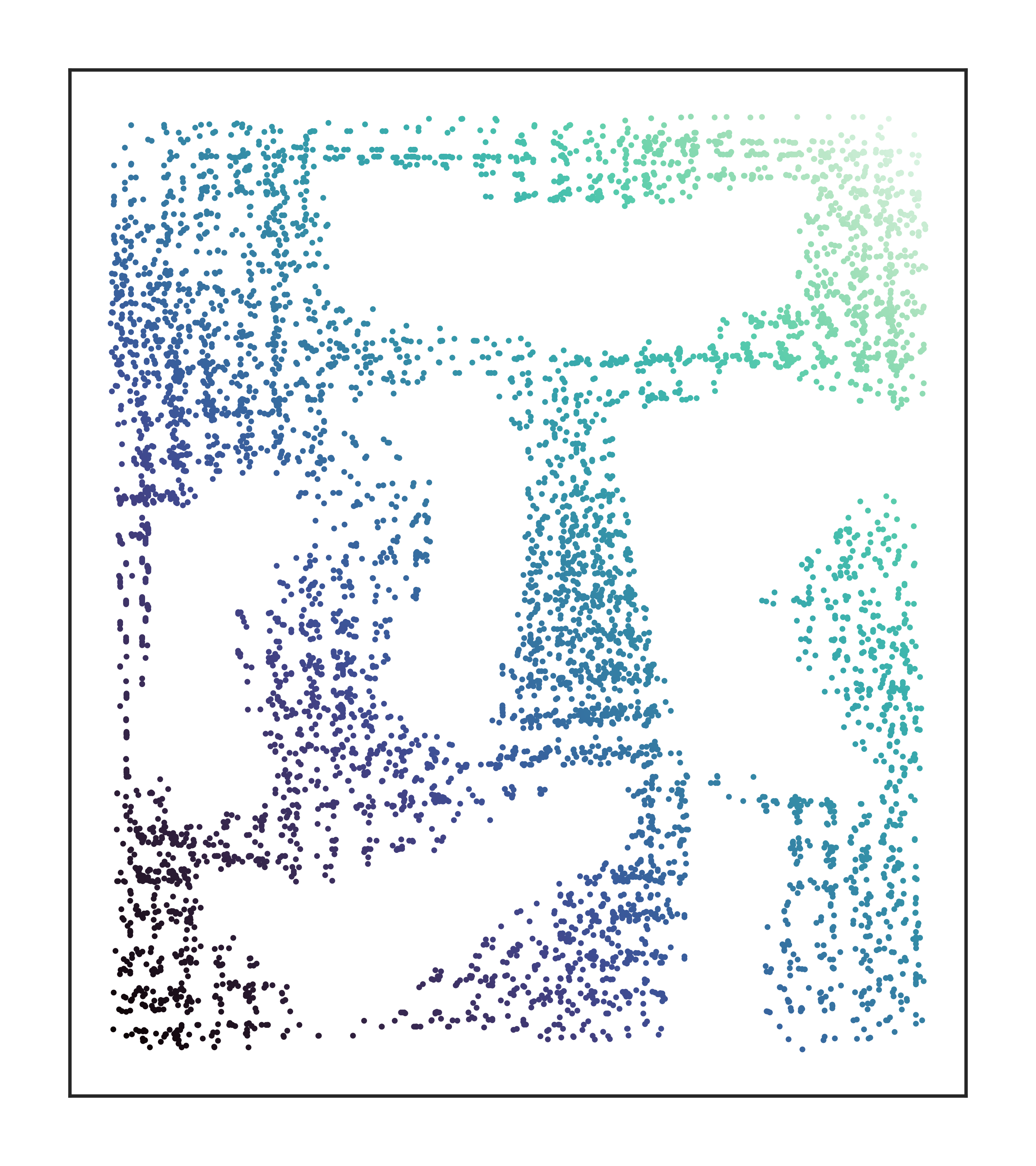}
    \caption{Pinball Domain}
  \end{subfigure}
  \caption{Ground truth visualization of possible positions of the agent in the evaluation Environments}\label{fig:antmazes}
\end{figure}

\paragraph{Antmaze Options} We consider options that move the ant in the $8$ directions (North, South, East, West, North-East, North-West, South-East, South-West) by a distance of $1$ unit. For the position controller, we train a goal-conditioned policy using Hindsight Experience Replay (HER; \cite{andrychowicz2017hindsight}) and TD3 \citep{fujimoto2018addressing} that would take a goal position in an drive walk the ant to it. This is generally hard for arbitrary goals given the separation between the current position and the goal, however, we only needed the policy to become accurate for short distances, so we sampled initial positions within $1.5$ of the desired goal. The goals were sampled uniformly over the possible positions in the maze. Then we learned the initiation sets as classifiers were the option execution would be successful. The termination condition is a threshold of $0.5$ distance to the goal.

\subsubsection{Network Architectures}

\paragraph{Pixel Observations} As encoder for pixel observation, we use ResNet Convolutional Networks, as used in Dreamer \citep{hafner2020mastering}. The ResNet starts with an initial $24$ depth and doubles in depth until reaching the minimal resolution. See Table \ref{table:cnn-conf}.

\begin{table}[h]
\centering
\caption{ResNet CNN Configuration} \label{table:cnn-conf}
\begin{tabular}{ll}
\toprule
Parameter & Value \\
\midrule
in width & $50$ \\
in height & $50$ \\
color channels & $1$ \\
depth & $24$ \\
cnn blocks & $2$ \\
min resolution & $4$ \\
mlp layers & $[256, ]$ \\
outdim & $4$ \\
mlp activation & silu \\
cnn activation & silu \\
\bottomrule
\end{tabular}
\label{tab:nn_configuration}
\end{table}

\paragraph{MLP Architectures} For all other models, we use MLPs  with the relevant input and output dimensions. This includes encoder, initiation classifiers, transition function, reward function and duration. For the reward function we use the symlog transformation \citep{hafner2020mastering} and a log transformation for the option duration network.
\begin{table}[h]
\centering
\caption{MLP Configuration}
\begin{tabular}{ll}
\toprule
Parameter & Value \\
\midrule
hidden dims & $[128, 128]$ \\
activation & relu \\
\bottomrule
\end{tabular}
\label{tab:reward_mlp_configuration}
\end{table}

\paragraph{Density Estimation} We use mixture of Gaussians with $4$ components and Gaussians with diagonal covariance matrices. We use the reparameterization trick \citep{kingma2013auto} to optimize the mean and variance functions. 

\subsubsection{Agent Hyperparameters}

To train our baseline DDQN agent with the following parameters that we tune by doing grid search for $5$ goal positions and $2$ seeds, we use all these parameters to learn for all goals.

\paragraph{Pinball Domain} For pixel observations we use the same architecture as described before for the world model encoder. For simpler observation, we use an MLP as before.

\begin{table}[h]
\centering
\caption{Pinball ground DDQN parameters}
\begin{tabular}{l|c}
\toprule
\textbf{Parameter} & \textbf{Value} \\
\midrule
final exploration steps & $500000$ \\
final epsilon & $0.1 $\\
eval epsilon &$ 0.001$ \\
replay start size & $10000$ \\
replay buffer size & $500000$ \\
target update interval & $10000 $\\
steps & $1250000$ \\
update interval &$ 5$ \\
num step return & $1$ \\
learning rate & $10^{-5}$ \\
$\gamma$ & $0.9997$\\
\bottomrule
\end{tabular}

\end{table}

\begin{table}[h]
\centering
\caption{Ground DDQN Parameters for the Antmazes}
\begin{tabular}{l|c}
\toprule
\textbf{Parameter} & \textbf{Value} \\
\midrule
final exploration steps & $350,000$ \\
final epsilon & $0.1$ \\
eval epsilon & $0.001$ \\
replay start size & $1,000$ \\
replay buffer size & $100,000$ \\
target update interval & $1,000$ \\
steps & $1,000,000$ \\
update interval & $5$ \\
num step return & $1$ \\
learning rate & $5 \times 10^{-4}$ \\
$\gamma$ & $0.995$ \\
\bottomrule
\end{tabular}
\end{table}

\begin{table}[h]
\centering
\caption{U-Maze Imagination DDQN Parameters}
\begin{tabular}{l|c}
\toprule
\textbf{Parameter} & \textbf{Value} \\
\midrule
final exploration steps (proportion) & 30\% of agent training steps \\
final epsilon & 0.1 \\
eval epsilon & 0.001 \\
replay start size & 1000 \\
replay buffer size & 100000 \\
target update interval & 10000 \\
update interval & 5 \\
num step return & 1 \\
learning rate & $1 \times 10^{-4}$ \\
rollout length & 100 \\
\bottomrule
\end{tabular}
\end{table}

\paragraph{Dreamer Baselines}
We use the publicly available implementations for the Dreamer baselines.
For the DreamerV2 \citep{hafner2020mastering} baseline, we used the hyperparameters recommended for DeepMind Control environments with (only) proprioception inputs.
Instead, for the DreamerV3 \citep{hafner2023mastering} baseline we used the recommended hyperparameters.

\subsubsection{World Model Hyperparameters}

\begin{table}[h]
\centering
\caption{World Model Parameters}
\begin{tabular}{l|c}
\toprule
\textbf{Parameter} & \textbf{Value} \\
\midrule
buffer size & $100,000$ \\
batch size & $16$ \\
learning rate & $1 \times 10^{-4}$ \\
train every & $8 $\\
max rollout length & $64 $\\
\bottomrule
\end{tabular}
\end{table}

\end{document}